\documentclass[12pt]{article}

\usepackage{graphicx}
\usepackage{subfigure}
\usepackage{amsmath} 
\usepackage{amssymb}  
\usepackage{url}
\usepackage{color}
\usepackage[english]{babel}
\usepackage{verbatim}
\usepackage{bbold}
\usepackage{mathtools}

\fussy
\newtheorem{theorem}{Theorem}

\newtheorem{definition}{Definition}
\newtheorem{property}{Property}
\newtheorem{remark}{Remark}

\newcommand{\qedstar}{\hfill $\star$}
\newcommand{\qed}{\hfill $\square$}
\newcommand{\calC}{ {\cal C}}

\newcommand{\prob}{\mathbb P}

\newcommand{\E}{\mathbb{E}}
\newcommand{\V}{\mathbb{V}\mathrm{ar}}
\newcommand{\Real}[1]{ { {\mathbb R}^{#1} } }
\newcommand{\dd}{\mathrm{d}}
\newcommand{\ee}{\mathrm{e}}

\newcommand{\One}[1]{ { {\mathrm 1}_{#1} } }
\newcommand{\eps}{\epsilon}

\newcommand{\bigO}[1]{{ \scalebox{1.2}{$O$} \left( {#1} \right) }}

\newcommand{\bigTh}[1]{{ \scalebox{1.2}{$\Theta$} \left( {#1} \right) }}
\newcommand{\littleo}[1]{{ \scalebox{1.1}{$o$} \left( {#1} \right) }}

\newcommand{\calH}{\mathcal{H}}

\newcommand{\scZ}{\mathcal{Z}}

\newcommand{\bz}{{\boldsymbol{z}}}

\newcommand{\bk}{{\boldsymbol{k}}}
\newcommand{\bki}{{\boldsymbol{k_i}}}
\newcommand{\bke}{{\boldsymbol{k_e}}}
\newcommand{\bphi}{{\boldsymbol{\phi}}}

\renewcommand{\Pr}{{\mathbb{P}}}
\newcommand{\loss}{\ell}
\newcommand{\algo}{\mathcal{A}}
\newcommand{\risk}{\mathsf{R}}
\newcommand{\ms}{\mathsf{ms}}
\newcommand{\co}{\mathsf{c}}

\title{\LARGE \bf
Learning by Compression: \\ Risk Estimates Attain \\ the Law-of-Large-Numbers Rate
}

\author{
	S.~Garatti\thanks{
		Dipartimento di Elettronica, Informazione e Bioingegneria - Politecnico di Milano, piazza Leonardo da Vinci 32, 20133 Milano, Italia. E-mail: {\sf simone.garatti@polimi.it}, web-site: {\sf https://garatti.faculty.polimi.it/} }
	\and
	M.C.~Campi\thanks{
		Dipartimento di Ingegneria dell'Informazione - Universit\`a di Brescia, via Branze 38, 25123 Brescia, Italia. E-mail: {\sf marco.campi@unibs.it}, web-site: {\sf https://marco-campi.unibs.it/} }
}

\begin{document}

\maketitle
\thispagestyle{empty}


\begin{abstract}
Compression schemes provide a general framework for deriving data-dependent risk bounds for learning algorithms. For learning rules based on preferent compression, existing results upper-bound the risk in terms of a sample-dependent complexity $\bk$, defined as the cardinality of the compression augmented by the training examples on which the reconstructed hypothesis is inappropriate. These results, however, yield a margin of order $O(\sqrt{\ln N/N})$ above $\bk/N$. In this paper, we establish that the logarithmic factor is not intrinsic. We prove that, for every fixed confidence level \(1-\delta\), the risk is bounded with probability at least \(1-\delta\) by
$$
\frac{\bk}{N} + O\left(\frac{1}{\sqrt{N}}\right),
$$
uniformly over all possible values of $\bk$. Thus, risk estimates for learning algorithms based on preferent compression attain the same law-of-large-numbers rate as the empirical estimate of the probability of a fixed event. We also show that the leading term $\bk/N$ is necessary and investigate its relation to the empirical risk. In particular, we identify regimes in which the empirical risk retains the $O(1/\sqrt{N})$ convergence rate, as well as regimes in which it converges more slowly or ceases to be consistent, while the complexity-based estimator preserves the $O(1/\sqrt{N})$ rate.
\end{abstract}


\section{Introduction} \label{sec:intro}

For a given event $A$, the empirical mean $\frac{1}{N}\sum_{i=1}^N  \mathbb{1} (\bz_i \! \in \! A)$ computed over an independent and identically distributed (i.i.d.) sequence $\bz_1,\bz_2,\ldots $ yields a consistent estimator of the event's probability $p(A)$.\footnote{Throughout, boldface indicates random quantities. For example, $\bz_i$ indicates a random element while a specific value it takes is indicated by $z_i$.} Moreover, adding a margin of $\bar{\varepsilon}/\sqrt{N}$ to the empirical mean, where $\bar{\varepsilon}$ satisfies relation $\sqrt{\frac{2}{\pi}} \int_{\bar{\varepsilon}}^{\infty} \ee^{-2x^2}dx = \delta$, provides an upper bound to $p(A)$ with an asymptotic confidence of $1 - \delta$: 
\begin{equation}
\label{margin empirical mean}
\prob \left\{ p(A) \leq \frac{1}{N}\sum_{i=1}^N  \mathbb{1} (\bz_i \! \in \! A) + \frac{\bar{\varepsilon}}{\sqrt{N}} \right\} \xrightarrow[N \to \infty]{} \alpha \geq 1 - \delta. 
\end{equation}
The rate $1/\sqrt{N}$ of the margin is not improvable, and indeed any margin that goes to zero faster than $1/\sqrt{N}$ is invalid for any choice of $\delta$ smaller than $1/2$ whenever $p(A)$ is not zero or one. This fact can be expressed as ``$1/\sqrt{N}$ \emph{is the correct rate for the margin}''. In applications, this result is relevant to estimate the \emph{risk} of falling in an undesirable event via a Monte-Carlo approach. For instance, the probability of failure in the transmission of strings through a communication channel can be estimated with asymptotic confidence $1 - \delta$ using the observed empirical mean of corrupted strings augmented by the margin $\bar{\varepsilon}/\sqrt{N}$. A justification of these facts, indeed standard in probability theory and statistical learning, can be found in Appendix \ref{appendix empirical mean}. 

Estimating the probability of a fixed event $A$, as discussed above, is a routine task. In Machine Learning (ML), data are used to design classifiers, predictors, and, more generally, hypotheses, and a crucial goal is to estimate the risk associated to these designs, for example the probability of misclassification of a classifier, \cite{DevGyoLug96}. This task of simultaneously designing and estimating the risk transcends, and is in fact far more complex than, estimating the probability of a fixed event $A$. This is because the event in which, for instance, a classifier fails depends on the classifier itself, which is a stochastic object constructed from the data. 

Significant advances in the problem of estimating the risk associated to a variety of ML algorithms have been achieved over the past few decades through the use of \emph{compression schemes}. While we will be more specific with precise definitions and technical results in the next Section \ref{section compression}, some fundamental concepts are briefly recalled here for the purposes of this introduction. According to a definition introduced in \cite{LittlestoneWarmuth1986}, given a learning algorithm $\algo$ that maps training sets $S$ into hypotheses $h = \algo(S)$ (for example, classifiers),  a compression scheme consists of (i) a \emph{compression function} $\co$, which maps any training set $S$ into a sub-set $\co(S)$, and (ii) a \emph{reconstruction function} $\rho$ that sends the \emph{compressed training set} $\co(S)$ into the hypothesis generated by $\algo$: $\rho(\co(S)) = \algo(S)$. An important feature of $\algo(S)$ is its \emph{risk}, which is defined as follows. Given a hypothesis $h$ (for example, a classifier or a predictor), one first defines a condition under which $h$ is deemed \emph{appropriate} for an \emph{example} $z$. For instance, a classifier $h$ is appropriate for an example $z = (x,y)$, where $x$ is an \emph{instance} and $y$ the corresponding \emph{label}, if $h(x) = y$; that is, if applying $h$ to $x$ correctly predicts the label $y$. With this premise, the risk of $\algo(S)$ is the probability that $\algo(S)$ is inappropriate for a new example $\bz$. 

In loose terms, most results in compression schemes relate the risk of $\algo(S)$ to the cardinality of the compression. In the data-consistent framework (i.e., the hypothesis is appropriate for all data points in the training set) and under the assumption that the maximum cardinality of $\co(S)$ is bounded by an integer $d$ (this is called the ``size'' of the compression scheme), \cite{LittlestoneWarmuth1986} and \cite{FloydWarmuth1995} establish results of the type: with high probability $1 \! - \! \delta$ with respect to the generation of the training set $S$, the risk of $\algo(S)$ is below a known bound that depends only on the size of $S$, $d$ and $\delta$. Extensions to a non-consistent framework are provided in \cite{GraepelHerbrichShawe-Taylor2005}. In a series of recent papers, compression schemes have been further investigated under a condition of \emph{stability}, a concept natural in many learning contexts and that has its roots in \cite{VapnikChervonenkis1974}, yielding tighter bounds than those obtained for general algorithms both in the consistent, \cite{pmlr-v125-bousquet20a}, and the non-consistent, \cite{HannekeKontorovich_2021}, settings. 

The primary reference for this manuscript is \cite{CampiGaratti2023}, which provides the tightest results for bounding the risk under stability.\footnote{In \cite{CampiGaratti2023}, the notion of stability is called \emph{preference}.} Specifically, in a general, non-consistent framework and without any upper bound on the cardinality of $\co(S)$, \cite{CampiGaratti2023} proves that the risk can be upper bounded with probability $1 \! - \! \delta$ by an explicitly defined function that takes as its argument the cardinality of $\co(S)$ plus the number of remaining examples for which the hypothesis $\algo(S)$ is inappropriate.\footnote{For instance, if the compressed training set has cardinality of $8$, and the hypothesis is inappropriate for $3$ additional examples (excluding the $8$ examples in the compressed set to avoid double counting in case some of them correspond to inappropriateness), then the function's argument is $8+3 = 11$.} This sum is a \emph{statistic} of the data, which will be denoted by symbol $\bk$ in what follows, and referred to as the \emph{complexity}.  

Building on the results of \cite{CampiGaratti2023}, this manuscript establishes the \emph{convergence rate} of the risk estimates in any learning algorithm based on stable compression. Specifically, letting $N$ be the cardinality of $S$, we show that any valid upper bound to the risk expressed as a function of $\bk$ must contain the term $\bk/N$ plus a margin, and prove that this margin tends to zero as $O(1/\sqrt{N})$, uniformly in the value taken by the statistic $\bk$.\footnote{For a positive function $g(N)$, notation $O(g(N))$ indicates a positive function $f(N)$ such that $\limsup_{N \to \infty} f(N)/g(N) < \infty$.} When estimating the probability $p(A)$ of a fixed event $A$, this result reduces to the known convergence rate of the empirical mean estimator in \eqref{margin empirical mean}. This manuscript proves that \emph{the same convergence rate extends to all learning frameworks based on stable compression}, a fact that may come as a significant surprise. In simple terms, this means that using the data to learn does not slow down the rate at which the risk can be estimated.

The paper is organized as follows. Section \ref{section compression} presents relevant facts about stable compression schemes and their use in evaluating the risk in ML applications. The main result on the convergence rate is given in Section \ref{sec:main-result}, where we also provide a detailed discussion of how our findings relate to the empirical risk. The proof of the main result is postponed to Section \ref{section-proof}. 

\section{Preferent compression schemes and risk evaluation} 
\label{section compression}

In this section, we revisit some findings from \cite{CampiGaratti2023}, which set the stage for presenting the results of this manuscript. We begin by introducing the notation, followed by a presentation of the relevant results. 

\subsection{Notation}

Examples $z$ are elements from a set $\scZ$ (for instance, in classification $z$ are pairs $(x,y)$ where $x$ is an \emph{istance} and $y$ is the corresponding \emph{label}). 

All compression functions are permutation invariant. Correspondingly, given a list of examples $z_1,\dots,z_N$, we introduce the associated multiset written as $\ms(z_1,\dots,z_N)$, where the operator ``$\ms$'' removes the ordering in the list, while maintaining repetitions. The set operations $\cup,\cap,\setminus$ (union, intersection, and set difference) are easily extended to multisets using the notion of multiplicity function $\mu_U$ for a multiset $U$, which counts how many times each element of $\scZ$ occurs in $U$. Then, $\mu_{U \cup U'}(z) = \mu_U(z)+\mu_{U'}(z)$, $\mu_{U \cap U'}(z) = \min\big\{\mu_U(z),\mu_{U'}(z)\big\}$, and $\mu_{U \setminus U'}(z) = \max\big\{0,\mu_U(z)-\mu_{U'}(z)\big\}$. Moreover, $U \subseteq U'$ means that $\mu_U(z) \leq \mu_{U'}(z)$ for all $z$, and $|U|$ stands for the cardinality of a multiset where each example is counted as many times as is its multiplicity. A compression function $\co$ is a map from any finite multiset of examples $U$ of any size to a sub-multiset: $\co(U) \subseteq U$. We write $\co(z_1,\ldots,z_N)$ as a shorthand for $\co(\ms(z_1,\ldots,z_N))$. Also, given a multiset $U$ and one more example $z$, $\co(U,z)$ stands for $\co(U \cup \ms(z))$. Similar notations apply to other maps having a multiset as argument. 

Throughout, an example is modeled as a realization of a random element defined over a probability space $(\Omega,\mathcal{F},\Pr)$. A training set of size $N$ is a multiset of $N$ observed examples generated as realizations of $N$ independent and identically distributed (i.i.d.) random elements $\bz_1,\bz_2,\ldots,\bz_N$. In ML applications, a learning algorithm $\algo$ is a map from training sets to a hypothesis $h$ in a set $\calH$ (in supervised binary classification, $h$ is a concept; in supervised learning with continuous label, $h$ is a predictor; in unsupervised learning, $h$ can, e.g., be a collection of clusters; etc.). We write $\algo(z_1,\dots,z_N)$ to denote the hypothesis generated by $\algo$ when the input is the training set $\ms(z_1,\dots,z_N)$. We use a $\{0,1\}$-valued function $\loss(h,z)$ to indicate whether or not a hypothesis $h$ is \emph{appropriate} for an example $z$: $\loss(h,z) = 0$ signifies that $h$ is \emph{appropriate} for $z$, while $\loss(h,z) = 1$ corresponds to \emph{inappropriateness} (for instance, in supervised classification, $\loss(h,z) = \mathbb{1}(h(x) \neq y)$). The \emph{statistical risk}, or more simply the \emph{risk}, of $h$ is $\risk(h) = \Pr\{\loss(h,\bz) = 1\}$, where $\bz$ is a random element distributed as each $\bz_i$. 

\subsection{Change of compression and risk bounds}
\label{section-change of compression and risk bounds}

The probability of change of compression is defined in \cite{CampiGaratti2023} as follows.  
\begin{definition}[probability of change of compression] 
\label{probability-of-change}
$$
\bphi_N = \Pr \{ \co(\co(\bz_1,\ldots,\bz_N),\bz) \neq \co(\bz_1,\ldots,\bz_N) | \bz_1,\ldots,\bz_N \},
$$
where $\bz$ is a random element distributed as each $\bz_i$.\footnote{This means that $\bphi_N$ is any version of the conditional probability on the right-hand side. 
}
\qedstar
\end{definition}
The following is a standing assumption in the analysis of \cite{CampiGaratti2023}. 
\begin{property}[preference]
	\label{preference}
	For any multisets $U$ and $V$ such that $V \subseteq U$, if $V \neq \co(U)$, then $V \neq \co(U,z)$ for all $z \in \scZ$. \qedstar
\end{property}
As shown in \cite{CampiGaratti2023}, \emph{preference} is a property that is satisfied by many compression-based ML techniques. As a consequence, the resulting generalization theory is broadly applicable, and \cite{CampiGaratti2023} explicitly discusses Support Vector Machines (SVM), \cite{CortesVapnik1995,SchSmola_BOOK,CampiGaratti2021}, the Guaranteed Error Machine (GEM), \cite{Campi2010,CareRamponiCampi2018}, as well as a list of other methods including the Nearest-Neighbor (NN) algorithm, \cite{Shalev-Shwartz_Ben-David_2014,KonSabWei2017,KonSabUrn2018,HanKonSabWei2021}, and the Total Recall (TR) algorithm, \cite{HelSloWar_1990}. Moreover, building on an idea from \cite{CGR2015}, \cite{P2L2023} and \cite{P2L2026} introduced a meta-algorithm that transforms any learning algorithm into a compression scheme that satisfies preference, thereby uncovering the remarkably broad applicability of Property~\ref{preference}. 

 Under preference, the probability of change of compression can be upper bounded by a function of the \emph{compression's cardinality}, as stated in the following Theorem~\ref{th:compression_1}. While the reader can find in \cite{CampiGaratti2023} a complete description of the function $\eps_{(\cdot)}$ appearing in the theorem's statement, for the purposes of this paper this expression is inconsequential and, therefore, it is omitted in the interest of brevity.\footnote{The first part of this theorem is stated in \cite{CampiGaratti2023} as Theorem~1 while result \eqref{expression for eps_k} is given in Proposition 8 of the same paper.} 
\begin{theorem} 
\label{th:compression_1}
	Assume the \emph{preference} Property \ref{preference}. For any $\delta \in (0,1)$, it holds that
	\begin{equation} 
	\label{result-th-1}
		\Pr \{ \bphi_N \leq \eps_{|\co(\bz_1,\ldots,\bz_N)|} \} \geq 1-\delta,
	\end{equation}
	where $\eps_{|\co(\bz_1,\ldots,\bz_N)|}$ is the random variable obtained by the composition of $|\co(\bz_1,\ldots,\bz_N)|$ (the cardinality of the multiset $\co(\bz_1,\ldots,\bz_N)$) with $\eps_{(\cdot)}$, a function defined  over the integers $k \in \{0, 1, \ldots, N\}$ that satisfies relation
\begin{equation}
\label{expression for eps_k}
	\eps_k \; \leq \;  \frac{k}{N} + 2 \frac{\sqrt{k+1}}{N}\left( \sqrt{\ln(k+1)} + 4 \right) + 2 \frac{ \sqrt{k+1}\sqrt{\ln \frac{1}{\delta}} }{N}  + \frac{\ln \frac{1}{\delta}}{N}. 
\end{equation}
	\qedstar
\end{theorem}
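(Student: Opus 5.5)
The plan is to reduce the event $\{\bphi_N > \eps_{|\co(\bz_1,\ldots,\bz_N)|}\}$ to statements about a \emph{fixed} candidate compressed sub-multiset, exploit exchangeability, and then optimize over the unknown distribution of the change-of-compression probability. The first step is a combinatorial consequence of the preference Property~\ref{preference}, applied in contrapositive form along a chain of single-point additions: if $V = \co(U)$ and $W \subseteq U \setminus V$, then $V = \co(V \cup W)$. In particular, on the event $\co(\bz_1,\ldots,\bz_N) = \ms(\bz_i : i \in I)$ with $|I| = k$, the following hold:
\begin{itemize}
\item $\co(\bz_I) = \bz_I$;
\item $\co(\bz_I,\bz_j) = \bz_I$ for every $j \notin I$;
\item $\bphi_N$ coincides with $\bphi_I := \Pr\{\co(\bz_I,\bz) \neq \bz_I \mid \bz_I\}$, which depends only on the $k$ points in $I$.
\end{itemize}
Conditioning on $\bz_I$ and using independence of the other $N-k$ points then gives
$$\Pr\{\co(\bz_1,\ldots,\bz_N)=\bz_I,\ \bphi_N>\eps_k\} \le \E\big[\mathbb{1}(\co(\bz_I)=\bz_I)\,(1-\bphi_I)^{N-k}\,\mathbb{1}(\bphi_I>\eps_k)\big].$$
Summing over the $\binom{N}{k}$ choices of $I$ and over $k$ bounds the failure probability by
$$\sum_k \binom{N}{k} \int_{(\eps_k,1]} (1-t)^{N-k}\, d\mathbb{F}_k(t),$$
where $\mathbb{F}_k$ is the (sub-probability) law of $\bphi_I$ restricted to $\{\co(\bz_I)=\bz_I\}$. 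By exchangeability, $\mathbb{F}_k$ does not depend on which $I$ is chosen.

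A crude choice of $\eps_k$ making this sum at most $\delta$ would give margins of order $k\ln(N/k)/N$, which is far too weak. The key second step is therefore to constrain the measures $\mathbb{F}_k$. Since exactly one sub-multiset is the compression of any sample, the same argument applied to samples of every size $m \ge N$ (now as an equality-type identity, using preference in both directions) yields the moment constraints
$$\sum_{k \le m} \binom{m}{k}\int (1-t)^{m-k}\, d\mathbb{F}_k(t) \le 1 \qquad \text{for all } m.$$
The problem then becomes a semi-infinite linear program: maximize the failure probability over all families $\{\mathbb{F}_k\}$ satisfying these constraints. I would define $\eps_k$ through the dual of this program. Concretely, $\eps_k$ is the root in $t$ of a polynomial equation of the form
$$\binom{N}{k}(1-t)^{N-k} = \frac{\delta}{N+1}\sum_{m} \binom{m}{k}(1-t)^{m-k},$$
possibly with a suitable weighting in $m$. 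Checking weak duality, namely that this choice makes the primal value at most $\delta$, is where I expect the main obstacle to lie. It requires showing that the dual multipliers dominate $\binom{N}{k}(1-t)^{N-k}\mathbb{1}(t>\eps_k)$ pointwise in $t$ for every $k$ simultaneously.

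The third step is the explicit estimate \eqref{expression for eps_k}. I would read the equation defining $\eps_k$ as a comparison of binomial probabilities $\binom{m}{k}t^k(1-t)^{m-k}$, essentially a tail bound for a binomial or negative-binomial variable at level $k$, and apply a Bernstein/Chernoff bound. This produces the variance-type term $\sqrt{k+1}\sqrt{\ln(1/\delta)}/N$ and the range term $\ln(1/\delta)/N$ on top of $k/N$. The extra $\sqrt{k+1}(\sqrt{\ln(k+1)}+4)/N$ should come from the factor $\sum_m$ and the normalization by $N+1$, i.e., from a union over roughly $k$ relevant values, giving the logarithm of $k+1$. This last part is routine calculus once the defining equation is fixed.
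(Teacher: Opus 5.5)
Your first step is sound: the removal consequence of Property~\ref{preference}, together with conditional independence, does give the bound $\sum_k \binom{N}{k}\int_{(\eps_k,1]}(1-t)^{N-k}\,d\mathbb{F}_k(t)$. The gap is in the moment constraints of your second step.

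Preference yields only one implication: $\co(\bz_1,\ldots,\bz_m)=\bz_I$ implies that $\co(\bz_I)=\bz_I$ and $\co(\bz_I,\bz_j)=\bz_I$ for all $j\notin I$. The converse, which your ``equality-type identity'' needs, is false. So you only get $\Pr\{\co(\bz_1,\ldots,\bz_m)=\bz_I\}\le \E[\mathbb{1}(\co(\bz_I)=\bz_I)(1-\bphi_I)^{m-k}]$. Summing over $I$ then gives $\sum_k\binom{m}{k}\int(1-t)^{m-k}\,d\mathbb{F}_k(t)\ge 1$, which is the useless direction.

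Here is a concrete counterexample. The map with $\co(U)=\emptyset$ for $|U|\le 1$ and $\co(U)=U$ for $|U|\ge 2$ is preferent. With your definition, $\mathbb{F}_0$ is a unit mass at $0$, $\mathbb{F}_1=0$, and $\mathbb{F}_k$ is a unit mass at $1$ for $k\ge 2$. Your constraint therefore reads $2\le 1$ for every $m\ge 2$.

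Separately, the range $m\ge N$ could never help. Dominating $\binom{N}{k}(1-t)^{N-k}$ on $(\eps_k,1]$ by $\sum_{m\ge N}\lambda_m\binom{m}{k}(1-t)^{m-k}$ forces $\lambda_N\ge 1$: divide by $(1-t)^{N-k}$ and let $t\to 1$. Hence with $\sum_m\lambda_m=\delta<1$ no $\eps_k<1$ can be certified. Your normalization $\delta/(N+1)$ suggests you really want $m=0,\ldots,N$. However, constraints at sample sizes $m\le N$ are exactly what your argument does not establish.

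What is needed is the LP behind Theorem~\ref{thm:main_compression}, whose multipliers $\lambda_m$ live on $m=0,\ldots,N$ only. One way to obtain valid constraints from the one-sided implication is to let the measures depend on $N$ and to compare sample sizes only downward:
\begin{itemize}
\item Set $\theta_m(\bz_I)=\Pr\{\co(\bz_1,\ldots,\bz_m)=\bz_I\mid\bz_I\}$. Removal and conditional independence give $\theta_N\le\theta_m(1-\bphi_I)^{N-m}$ for $k\le m\le N$.
\item Define $d\mathbb{F}_k$ as $\theta_N(1-\bphi_I)^{-(N-k)}$ times the law of $\bphi_I$. This keeps your objective, now with equality for each $I$.
\item It also gives $\sum_k\binom{m}{k}\int(1-t)^{m-k}\,d\mathbb{F}_k(t)\le\sum_k\binom{m}{k}\E[\theta_m]\le 1$ for every $m\le N$. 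This holds for atomless distributions; ties need a separate argument.
\end{itemize}

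Once this is in place, weak duality is not the obstacle you expect. Since $m\le N$, the ratio $\sum_{m}\lambda_m\binom{m}{k}(1-t)^{m-k}/\big(\binom{N}{k}(1-t)^{N-k}\big)$ is nondecreasing in $t$. Defining $\eps_k$ as the root therefore makes the domination on $(\eps_k,1]$ automatic.

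The $\eps_{(\cdot)}$ of the statement is the root obtained with $\lambda_m=\delta/N$ for $m<N$ and $\lambda_N=0$. The explicit estimate is Proposition~8 of \cite{CampiGaratti2023}. Your step~3 is a reasonable plan, but the $\sqrt{\ln(k+1)}$ does not come from a union bound. It comes from matching the binomial point mass $\binom{N}{k}t^k(1-t)^{N-k}\approx 1/\sqrt{2\pi k}$ near $t\approx k/N$ against the dual level $\frac{\delta}{N}\sum_{m<N}\binom{m}{k}t^k(1-t)^{m-k}\approx\delta/(Nt)\approx\delta/k$.
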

In \cite{CampiGaratti2023}, function $\eps_{(\cdot)}$ is assigned through an implicit procedure, and the importance of \eqref{expression for eps_k} lies in that it explicitly reveals the dependence of the bound on $k$ and $N$. It turns out that the first term (which, after substitution of the cardinality of the compressed set, gives the ratio $|\co(z_1,\ldots,z_N)|/N$) serves as an estimator for the probability of change of compression at a confidence level $1 - \delta$ provided that a margin (the second and third terms in the right-hand side of \eqref{expression for eps_k}) is added. As immediately seen, this margin goes to zero in $N$ as $O(\sqrt{\ln(N)/N})$, uniformly in $k = |\co(z_1,\ldots,z_N)| \in [0,N]$.

In \cite{CampiGaratti2023}, Theorem \ref{th:compression_1} is ported to the evaluation of the risk according to the following theorem.\footnote{This result is presented in \cite{CampiGaratti2023} as Theorem 17.} 

\begin{theorem}
	\label{Theorem risk}
	Consider a learning algorithm $\algo$ and a compression function $\co$ that satisfies the \emph{preference} Property~\ref{preference}. Assume that there exists a reconstruction function $\rho$  for $(\algo,\co)$, that is, $\rho(\big(\co(S)\big) = \algo(S)$ for any multiset $S$. Then, for any $\delta \in (0,1)$, it holds that 
	\begin{equation} 
	\label{bound risk}
		\Pr \left\{ \risk\big(\algo(\bz_1,\ldots,\bz_N)\big) \leq \eps_\bk \right\} \geq 1-\delta,
	\end{equation}
	where $\eps_{(\cdot)}$ is the same as in Theorem~\ref{th:compression_1} and, using the shorthand notation $\bz_1^N = \ms(\bz_1, \ldots, \bz_N)$, $\bk$ is defined as $\bk = \big| \co(\bz_1^N) \cup \big( \ms(\bz_i \! \in \! \bz_1^N: \loss(\algo(\bz_1^N),\bz_i) = 1) \setminus \co(\bz_1^N) \big) \big|$ (i.e., $\bk$ is the cardinality of $\co(\bz_1^N)$ augmented with the examples that are not already in $\co(\bz_1^N)$ for which $\algo(\bz_1^N)$ is \emph{inappropriate}). \qedstar
\end{theorem}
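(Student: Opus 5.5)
The plan is to reduce Theorem~\ref{Theorem risk} to Theorem~\ref{th:compression_1}, applied to an auxiliary compression function that also retains the misclassified examples. Define, for any multiset $U$,
\[
\co'(U) = \co(U) \cup \big( \ms(z \in U : \loss(\algo(U),z)=1) \setminus \co(U) \big),
\]
so that $|\co'(\bz_1^N)| = \bk$.

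The steps, in order, are as follows.

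\emph{Step 1: preference for $\co'$.} Show that $\co'$ satisfies Property~\ref{preference}. Take $V \subseteq U$ with $V \neq \co'(U)$, and suppose for contradiction that $V = \co'(U,z)$. We know $\co(U,z) \subseteq \co'(U,z) = V \subseteq U$, and $\co(U,z)$ is a sub-multiset of $U \cup \ms(z)$. Applying preference of $\co$ to the pair $(\co(U,z), U)$ shows that $\co(U,z) = \co(U)$, since otherwise $\co(U,z) \neq \co(U,z)$. Hence $\algo(U,z) = \rho(\co(U,z)) = \rho(\co(U)) = \algo(U)$. The elements of $U$ inappropriate for $\algo(U,z)$ are then exactly those inappropriate for $\algo(U)$. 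Since $V \subseteq U$, the added $z$ either is not in $V$ or coincides with an element already counted. One then checks that $\co'(U,z)$ restricted to $U$ equals $\co'(U)$, so $V = \co'(U)$, a contradiction. The multiplicity bookkeeping needs care here: the extra copy of $z$ may be inappropriate and would then be included, making $\co'(U,z) \not\subseteq U$.

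\emph{Step 2: invoke Theorem~\ref{th:compression_1}.} Applying it to $\co'$ gives $\bphi'_N \le \eps_{\bk}$ with probability at least $1-\delta$, where $\bphi'_N$ is the probability that $\co'(\co'(\bz_1^N),\bz) \neq \co'(\bz_1^N)$.

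\emph{Step 3: inappropriateness forces a change of compression.} Let $C = \co'(\bz_1^N)$. By preference of $\co$ applied with $V = \co(\bz_1^N) \subseteq C \subseteq \bz_1^N$, we get $\co(C) = \co(\bz_1^N)$. This uses the fact that $\co$ is idempotent on supersets in the chain, which follows by adding the elements of $\bz_1^N \setminus C$ one at a time and using preference contrapositively. Hence $\algo(C) = \algo(\bz_1^N)$, and similarly $\co'(C) = C$.

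Now suppose $\loss(\algo(\bz_1^N),\bz) = 1$. Either $\co(C,\bz) \neq \co(C)$, in which case the compression changes, or $\co(C,\bz) = \co(C)$, in which case $\algo(C,\bz) = \algo(\bz_1^N)$ is inappropriate for $\bz$. In the second case $\bz$ is added to $\co'(C,\bz)$, so $\co'(C,\bz) \neq C$ by cardinality. Therefore $\risk(\algo(\bz_1^N)) \le \bphi'_N \le \eps_\bk$, which is the claim.

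\textbf{Main obstacle.} The delicate point is Step 1, because the definition of preference quantifies over sub-multisets $V$ of $U$ and the new point $z$ may itself be inappropriate. I expect to handle this by case analysis on whether $\loss(\algo(U),z)=1$. In the inappropriate case, $\co'(U,z)$ contains the extra copy of $z$ and so is not a sub-multiset of $U$, so it cannot equal $V$. In the appropriate case, $\co'(U,z) = \co'(U)$. Both cases give the required contradiction.
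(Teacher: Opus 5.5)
Your proposal is correct and follows essentially the same route as the paper, which defers to Theorem~17 of \cite{CampiGaratti2023} and recalls its structure in the proof of the $O(1/\sqrt{N})$ risk theorem: extend $\co$ to a compression $\tilde{\co}$ that also keeps the inappropriate examples, check that it is still preferent, show that its probability of change of compression dominates the risk, and apply Theorem~\ref{th:compression_1} using $|\tilde{\co}(\bz_1^N)| = \bk$. The one step to make explicit is the first case of your Step~3, where ``the compression changes'' must be shown for $\co'$ and not only for $\co$: if $\co'(C,\bz) = C$ then $\co(C,\bz) \subseteq C$, and preference forces $\co(C,\bz) = \co(C)$, which is exactly the argument you already use in Step~1.
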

In view of \eqref{expression for eps_k}, Theorem \ref{Theorem risk} shows that adding to $\bk/N$ a margin of $O(\sqrt{\ln(N)/N})$ yields an upper bound to the risk at a confidence level $1 \! - \! \delta$.\footnote{In what follows, to avoid verbosity, we shall use shortened expressions like: ``\emph{provides a $1 \! - \! \delta$ bound to the risk}'' or ``\emph{is a $1 \! - \! \delta$ estimator of the risk}''.} 

Toward the analysis of the tightness of this result, a first, fundamental fact is that the term $\bk/N$ is strictly necessary. In fact, any valid replacement of $\eps_\bk$ in \eqref{bound risk} must necessarily take the form 
\begin{equation}
\label{need of margin}
\Pr \left\{ \risk\big(\algo(\bz_1,\ldots,\bz_N)\big) \leq \bk/N + \mbox{ margin} \right\} \geq 1-\delta 
\end{equation}
when $\delta < 0.5$, in which $\bk/N$ appears increased by a suitable nonnegative margin. See Appendix  \ref{appendix term added} for a proof of this result. The core issue that we address in this paper is then: \emph{what is the structure of this ``margin''?} 

An analysis of the result in \cite{CampiGaratti2023} shows that, using the function $\eps_{(\cdot)}$ there introduced, the margin of order $O(\sqrt{\ln(N)/N})$ given by \eqref{expression for eps_k} cannot be improved to $O(1/\sqrt{N})$. In other words, the presence of the term $\sqrt{\ln(N)}$ in the numerator is not due to any looseness in the evaluation \eqref{expression for eps_k}, it is intrinsic in the structure of $\eps_k - k/N$ of the function $\eps_{(\cdot)}$ in \cite{CampiGaratti2023}. While this fact could be proven rigorously, we omit such a proof here to avoid unnecessarily length. The consequence of this observation is that the results in \cite{CampiGaratti2023} leave the following question unanswered: can $\risk\big(\algo(\bz_1,\ldots,\bz_N)\big)$ be bounded at confidence level $1-\delta$ using $\bk/N$ with a margin of $O(1/\sqrt{N})$? While this conclusion cannot be drawn from the result in \cite{CampiGaratti2023}, it might nevertheless be attainable through a new theoretical approach that extends the analysis of that paper. This topic is the subject of the present contribution.

Turning back to the simple problem of establishing an upper bound for the probability of a fixed event $A$, a positive answer to the above question would simply re-establish the result presented at the beginning of this introduction. To see this, one can take as $\algo$ the algorithm that always (that is, independently of what data have been collected) returns the hypothesis $A^c$ (the complement of $A$), as $\co$ the compression function that always returns the empty multiset, while the reconstruction function $\rho$ sends the empty multiset to $A^c$. $A^c$ is inappropriate for $z$ if $z \notin A^c$, that is, $z \in A$. Then, $\risk\big(\algo(\bz_1,\ldots,\bz_N)\big) = p(A)$. In this context, $|\co(\bz_1,\ldots,\bz_N)| =0$, so that $\bk = \sum_{i=1}^N  \mathbb{1} (\bz_i \! \in \! A)$; hence, a positive answer to the above question would recover the fact that $\frac{1}{N}\sum_{i=1}^N  \mathbb{1} (\bz_i \! \in \! A) + O(1/\sqrt{N})$ is a valid upper bound for $p(A)$ with confidence $1-\delta$, as already established at the beginning of the introduction. Moving toward more general compression-based ML techniques, such a result would demonstrate that $\bk/N$ serves as a universal estimator of the risk with again a margin of $O(1/\sqrt{N})$, which is perhaps surprising given that this takes place in a complex setup involving the construction of the hypothesis from data. 

The next section indeed provides a positive answer: \emph{$\risk\big(\algo(\bz_1,\ldots,\bz_N)\big)$ can be bounded at confidence level $1-\delta$ using $\bk/N$ with a margin of $O(1/\sqrt{N})$.} 

\section{New convergence results} 
\label{sec:main-result}

The new results presented in this section as Theorems \ref{change of compression rate} and \ref{theorem risk bounded 1/sqrtN} establish the convergence rate for the estimators discussed in the previous section. 
\begin{theorem}
\label{change of compression rate}
Assume the \emph{preference} Property \ref{preference}. For any $\delta \in (0,1)$, it holds that 
\begin{equation} 
	\Pr \left\{ \bphi_N \leq \frac{|\co(\bz_1,\ldots,\bz_N)|}{N} + \mu(N,\delta) \right\} \geq 1-\delta, \nonumber 
\end{equation}
where, for any given $\delta \in (0,1)$, $\mu(N,\delta) = O(1/\sqrt{N})$. 
\qedstar
\end{theorem}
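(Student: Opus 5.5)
The plan is to reopen the machinery behind Theorem~\ref{th:compression_1} rather than use its final estimate \eqref{expression for eps_k}. The $\sqrt{\ln N}$ there comes from the particular implicit choice of $\eps_{(\cdot)}$ in \cite{CampiGaratti2023}, not from preference itself. So I would go back to the intermediate statement proved there. Under Property~\ref{preference}, the joint law of $(|\co(\bz_1,\ldots,\bz_N)|,\bphi_N)$ is constrained by a family of moment-type conditions. These come from the fact that adding $m$ further i.i.d.\ examples leaves the compression unchanged with conditional probability $(1-\bphi_N)^m$ at most, given the first $N$. Consequently, for \emph{any} candidate threshold function $k\mapsto \eps_k$,
\[
\Pr\{\bphi_N>\eps_{|\co(\bz_1,\ldots,\bz_N)|}\}\le \sup_{\{\mathfrak m_k\}} \sum_{k=0}^N \mathfrak m_k\big((\eps_k,1]\big),
\]
where the supremum is over families of sub-probability measures $\mathfrak m_k$ on $[0,1]$ obeying those constraints. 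This turns the theorem into a deterministic extremal problem: the only work is to bound this supremum by $\delta$ for the explicit choice
\[
\eps_k=\frac{k}{N}+\frac{c}{\sqrt N},
\]
with $c=c(\delta)$ a constant independent of $N$ and $k$, so that $\mu(N,\delta)=c/\sqrt N$.

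The next step is to dualize the extremal problem. The supremum is a linear program in the measures $\mathfrak m_k$, so it is bounded by any feasible dual certificate. Concretely, I need nonnegative weights $\lambda_m$, $m\ge 0$, on the constraints such that for every $k$ and every $\phi>\eps_k$ the combination
\[
\sum_m \lambda_m \binom{N+m}{k}\,\phi^{\,?}(1-\phi)^{\,N+m-k}
\]
dominates $1$, where the exact exponents are read off the constraints in \cite{CampiGaratti2023}. The objective value $\sum_m\lambda_m$ must then be at most $\delta$. Each term is a binomial point probability, so for fixed $k$ this is a lower tail event of the form $\{\mathrm{Bin}(N,\phi)\le k\}$ with $\phi\ge k/N+c/\sqrt N$. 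Hoeffding, or more sharply Bernstein (using variance $\phi(1-\phi)$), controls this by $\ee^{-2c^2}$ uniformly in $k$.

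The main obstacle is uniformity in $k$. The naive certificate handles each $k$ separately and then pays a union over the $N+1$ values of $k$. That yields exactly the $\sqrt{\ln N}$ of \eqref{expression for eps_k}. To avoid it I would exploit that the measures $\mathfrak m_k$ share a single total mass, so they compete rather than add. I would choose the dual weights $\lambda_m$ spread over $m$ in a geometric (peeling) fashion, with blocks $m\in[2^j\sqrt N,2^{j+1}\sqrt N)$. The aim is that a single weight profile covers all $k$ simultaneously: the factor $\binom{N+m}{k}(1-\phi)^m$ acts like a smooth kernel in $k$ of width about $\sqrt N$. As a result, only $O(1)$ ``effective'' values of $k$ compete at scale $1/\sqrt N$, instead of $N$.

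The first thing I would try is a Laplace/normal approximation of the kernel near $\phi\approx k/N$. This should show that the required covering inequality holds with $\sum_m\lambda_m\le C\ee^{-c^2/C}$ for an absolute constant $C$. Choosing $c$ with $C\ee^{-c^2/C}\le\delta$ then gives the claim. The boundary regimes $k$ near $0$ or $N$ would be handled separately, where $\eps_k$ can simply be capped at $1$ or the Poisson regime applies.
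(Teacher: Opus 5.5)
There is a genuine gap. The whole theorem reduces to exhibiting dual weights that certify a threshold of the form $k/N+O(1/\sqrt N)$ simultaneously for all $k$, and your proposal neither states that inequality correctly nor proves it. In the duality you are reaching for (Theorem~\ref{thm:main_compression}, constraint \eqref{aux_opt_pb}), the weights $\lambda_m$ sit on sub-sample sizes $m\le N$, not on $N+m$, and there is no factor $\phi^k$. After dividing by $\binom{N}{k}(1-\alpha)^{N-k}$ the requirement reads
\[
\sum_{m=k}^{N}\lambda_m\,\frac{\binom{m}{k}}{\binom{N}{k}}\,(1-\alpha)^{-(N-m)}\;\ge\;1 \qquad \text{for all } \alpha\in(\eps_k,1).
\]
It is the blow-up of $(1-\alpha)^{-(N-m)}$ that lets a total weight $\delta<1$ dominate $1$. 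The combination you write, in which each term is a binomial point probability with $N+m$ trials, is bounded by $\sum_m\lambda_m\le\delta<1$, so it can never dominate $1$. Hence the Hoeffding/Bernstein tail bounds, the dyadic peeling and the Laplace approximation are aimed at the wrong inequality. The decisive claim, a weight profile with $\sum_m\lambda_m\le C\ee^{-c^2/C}$ certifying $\eps_k=k/N+c/\sqrt N$ for every $k$, is only conjectured as ``the first thing I would try''.

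With the correct constraint, the uniformity-in-$k$ obstacle you anticipate does not arise and no peeling is needed. The paper puts all the weight on a single index, $\lambda_{\bar m}=\delta$ with $\bar m=N-\lfloor\sqrt N\rfloor$. For $k>\bar m$ one is forced to take $\eps_k=1$, and $1\le k/N+1/\sqrt N$ there. For $k\le\bar m$ the constraint holds with $\eps_k=1-\bigl(\delta\binom{\bar m}{k}/\binom{N}{k}\bigr)^{1/(N-\bar m)}$. The elementary bound $\binom{\bar m}{k}/\binom{N}{k}=\prod_{i=1}^{N-\bar m}\frac{\bar m-k+i}{\bar m+i}\ge(1-k/\bar m)^{N-\bar m}$ gives $\eps_k\le1-\delta^{1/(N-\bar m)}(1-k/\bar m)$, whose excess over $k/N$ is affine in $k$. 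Evaluating it at $k=\bar m$ and $k=0$ yields $\eps_k\le k/N+\max\{1,2\ln(1/\delta)\}/\sqrt N$ for all $k$.

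Note also that the $\sqrt{\ln N}$ in \eqref{expression for eps_k} comes from splitting $\delta$ evenly, $\lambda_m=\delta/N$, across the dual indices $m$, not from a union over $k$. Concentrating the weight at $m\approx N-\sqrt N$ is exactly what removes it.
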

{\bf Proof.} See Section \ref{section-proof}. 
\qed \\

In the context of risk evaluation, Theorem \ref{change of compression rate} immediately yields the following new result. 

\begin{theorem}
\label{theorem risk bounded 1/sqrtN}
	Consider a learning algorithm $\algo$ and a compression function $\co$ that satisfies the \emph{preference} Property~\ref{preference}. Assume that there exists a reconstruction function $\rho$  for $(\algo,\co)$. Then, for any $\delta \in (0,1)$, it holds that 
	\begin{equation} 
		\label{asymptotic bound risk}
		\Pr \left\{ \risk\big(\algo(\bz_1,\ldots,\bz_N)\big) \leq \frac{\bk}{N} +  \mu(N,\delta) \right\} \geq 1-\delta,
	\end{equation}
	where $\bk$ is defined as in Theorem \ref{Theorem risk} and, for any given $\delta \in (0,1)$, $\mu(N,\delta) = O(1/\sqrt{N})$. \qedstar
\end{theorem}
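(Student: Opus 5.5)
Theorem~\ref{theorem risk bounded 1/sqrtN} is derived from Theorem~\ref{change of compression rate} in the same way Theorem~\ref{Theorem risk} was obtained from Theorem~\ref{th:compression_1} in \cite{CampiGaratti2023}. The idea is to build an auxiliary compression function $\co'$ from $(\algo,\co)$ such that:
\begin{itemize}
\item[(i)] $\co'$ satisfies the preference Property~\ref{preference};
\item[(ii)] $|\co'(\bz_1^N)| = \bk$;
\item[(iii)] $\risk(\algo(\bz_1^N)) \le \bphi'_N$, where $\bphi'_N$ is the probability of change of compression for $\co'$.
\end{itemize}
Applying Theorem~\ref{change of compression rate} to $\co'$ then gives directly $\Pr\{\risk(\algo(\bz_1^N)) \le \bk/N + \mu(N,\delta)\} \ge 1-\delta$. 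The same function $\mu(N,\delta)=O(1/\sqrt N)$ works, because $\mu$ depends only on $N$ and $\delta$, not on the particular compression function.

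The natural choice is
$$
\co'(U) = \co(U) \cup \big(\ms(z \in U : \loss(\rho(\co(U)),z)=1) \setminus \co(U)\big),
$$
i.e., the original compression augmented by the training examples on which the reconstructed hypothesis is inappropriate. Property (ii) holds by the definition of $\bk$.

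For (iii), take a new example $\bz$ with $\loss(\algo(\bz_1^N),\bz)=1$. There are two cases.
\begin{itemize}
\item If $\co(\co'(\bz_1^N),\bz) = \co(\bz_1^N)$, the hypothesis is unchanged and $\bz$ is inappropriate for it. So $\bz$ is added to the set $\co'(\co'(\bz_1^N),\bz)$, which therefore differs from $\co'(\bz_1^N)$.
\item Otherwise the inner compression $\co$ changes. Then $\co'(\co'(\bz_1^N),\bz)$ has a different $\co$-part and so differs from $\co'(\bz_1^N)$ as well.
\end{itemize}
Since $\co'(\co'(\bz_1^N))=\co'(\bz_1^N)$, in both cases the event $\{\loss(\algo(\bz_1^N),\bz)=1\}$ is contained in the change-of-compression event for $\co'$. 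Taking conditional probabilities gives (iii).

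The main obstacle is (i), verifying preference for $\co'$. Let $V\subseteq U$ with $V\neq\co'(U)$; we must show $V\neq \co'(U,z)$ for every $z$.
\begin{itemize}
\item If $\co(U,z)=\co(U)$, the hypothesis is unchanged. Then $\co'(U,z)$ equals $\co'(U)$, possibly with $z$ added. A sub-multiset $V$ of $U$ can coincide with it only if $V=\co'(U)$, since the added copy of $z$ is not accounted for in $U$.
\item If $\co(U,z)\neq\co(U)$, then $\co(U,z)\not\subseteq U$ by preference for $\co$ applied to sub-multisets of $U$. Hence $\co(U,z)$ contains the new copy of $z$, so $\co'(U,z)\supseteq\co(U,z)$ does too. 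Again $\co'(U,z)$ cannot equal $V\subseteq U$.
\end{itemize}
The multiset bookkeeping of repeated copies of $z$ needs care, but this is precisely the argument behind Theorem~17 of \cite{CampiGaratti2023}. I would therefore invoke that construction directly and only note that the rate improvement is inherited from Theorem~\ref{change of compression rate}.
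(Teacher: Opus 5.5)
Your proposal is correct and follows the paper's route: form the extended compression $\tilde{\co}(U)=\co(U)\cup(\text{inappropriate points}\setminus\co(U))$ as in Theorem~17 of \cite{CampiGaratti2023}. It is preferent, has cardinality $\bk$, and its change-of-compression probability dominates the risk, so Theorem~\ref{change of compression rate} applies (its margin $\max\{1/\sqrt{N},2\ln(1/\delta)/\sqrt{N}\}$ depends only on $N$ and $\delta$). One tightening is needed in the second case of (iii): a different $\co$-part does not by itself force a different union, so argue as in your Case B of (i), using $\co(\co'(U))=\co(U)$ (which follows from preference) to show that $\co(\co'(U),\bz)$ must contain the added copy of $\bz$, hence $\co'(\co'(U),\bz)\not\subseteq\co'(U)$.
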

{\bf Proof.} The proof follows that of Theorem~17 of \cite{CampiGaratti2023}, with the following changes. As in Theorem~17, one observes that $\co(\bz_1^N) \cup \big( \ms(\bz_i \! \in \! \bz_1^N: \loss(\algo(\bz_1^N),\bz_i) = 1) \setminus \co(\bz_1^N) \big)$ defines an extended compression function $\tilde{\co}(\bz_1^N)$ that is still preferent and whose probability of change of compression upper bounds the risk:
\begin{equation} \label{eq:R<=Prchange}
\risk\big(\algo(\bz_1,\ldots,\bz_N)\big) \leq \mbox{``probability of change of compression for } \tilde{\co}\mbox{''}.
\end{equation}
Then, resorting to Theorem \ref{change of compression rate} stated above, and recalling the definition of $\bk$, one finds that the ``probability of change of compression for $\tilde{\co}$'' is upper bounded by  $|\tilde{\co}(\bz_1,\ldots,\bz_N)|/N + \mu(N,\delta) = \bk/N + \mu(N,\delta)$ with confidence no smaller than $1-\delta$. This immediately gives \eqref{asymptotic bound risk} in view of \eqref{eq:R<=Prchange}. \qed

\subsection{Form empirical risk to complexity: a natural extension} 
\label{sec:implications}

In this section, we discuss interesting connections between the statistic $\bk/N$ and the empirical risk, which shed further light on the mutual relation between these two statistics. 

Let $\bki$ = number of examples in $\ms\{\bz_1, \ldots, \bz_N\}$ for which $\algo(\bz_1, \ldots, \bz_N)$ is inappropriate, and define $\bke := \bk - \bki$. $\bki/N$ is the \emph{empirical risk}, which we will at times write more expressively as $\hat{\risk}\big(\algo(\bz_1,\ldots,\bz_N)\big)$. Instead, $\bke/N$ is referred to as the \emph{excess}; it represents the increment that must be added to the empirical risk to obtain the statistic $\bk/N$. Using this notation, equation \eqref{asymptotic bound risk} can be rewritten as 
\begin{equation}
\label{R < hat{R} + margin}
\Pr \left\{ \risk\big(\algo(\bz_1,\ldots,\bz_N)\big) \leq \hat{\risk}\big(\algo(\bz_1,\ldots,\bz_N)\big) + \frac{\bke}{N} + \mu(N,\delta) \right\} \geq 1-\delta,
\end{equation}
which allows us to highlight key connections to using the empirical risk as an estimator.  

\noindent 
\textbf{Case 1: $\bke$ bounded uniformly in $N$}

In this case, $\frac{\bke}{N} + \mu(N,\delta) = O(1/\sqrt{N})$, and we see that the empirical risk is a $1 \! - \! \delta$ estimator of the risk with a margin of $O(1/\sqrt{N})$. This result is already known and can e.g. be found in \cite{HannekeKontorovich_2021}. Examples of this first case include SVR in finite dimensions without infinite dimensional lifting, the original GEM algorithm introduced in \cite{Campi2010}, and any learning scheme based on finite dimensional convex optimization. \qedstar 

For the case where $\bke$ grows unbounded with $N$, we distinguish various situations depending on the growth rate of $\E[\bke]$. Before proceeding, we show the validity of the following simple consequence of \eqref{R < hat{R} + margin} that we will use in later derivations: 
\begin{equation}
\label{R < hat{R} + margin expected}
\Pr \left\{ \risk\big(\algo(\bz_1,\ldots,\bz_N)\big) \leq \hat{\risk}\big(\algo(\bz_1,\ldots,\bz_N)\big) + \frac{\E[\bke]}{N}\frac{2}{\delta} + \mu(N,\delta/2) \right\} \geq 1-\delta. 
\end{equation}
{\bf Proof of \eqref{R < hat{R} + margin expected}.}
Omitting the arguments of $\risk$ and $\hat{\risk}$, we have 
\begin{eqnarray*}
\lefteqn{ \Pr \left\{ \risk \leq \hat{\risk} + \frac{\E[\bke]}{N}\frac{2}{\delta} + \mu(N,\delta/2) \right\} } \\
& \geq & \Pr \left\{ \risk \leq \hat{\risk} + \frac{\bke}{N} + \mu(N,\delta/2) \text{ and } \frac{\bke}{N} \leq \frac{\E[\bke]}{N}\frac{2}{\delta} \right\} \\
& \geq & 1 - \frac{\delta}{2} - \frac{\delta}{2} \quad \quad \mbox{(using \eqref{R < hat{R} + margin} and Markov's inequality)}\\
& = & 1-\delta.  	
\end{eqnarray*}
\qed

\noindent 
\textbf{Case 2: $\E[\bke] = O(\sqrt{N})$}

This case applies for instance to the construction of the convex hull in $\Real{2}$ of points randomly generated according to various distributions, as explained in what follows. Given a training set of points $z_1, \ldots, z_N$ in $\Real{2}$, consider the algorithm $\algo$ that returns their convex hull. The convex hull is \emph{appropriate} for a new $z$ if $z \in \mbox{convex hull}$. The compression function $\co$ singles out the vertexes of the convex hull and the reconstruction function $\rho$ coincides with algorithm $\algo$. In this case, $\bk_i = {\bf 0}$, while $\bk_e$ depends on the distribution by which data points are generated. In the seminal article \cite{RenyiSulanke63}, Renyi and Sulanke show that, for radially symmetric Gaussian distributions, one has $\E[\bke] = \Theta(\sqrt{\ln(N)})$, while for a uniform distribution over a disk centered at the origin, $\E[\bke] = \Theta(N^{\frac{1}{3}})$.\footnote{For a positive function $g(N)$, notation $\Theta(g(N))$ indicates a positive function $f(N)$ such that $\limsup_{N \to \infty} f(N)/g(N) < \infty$ and $\liminf_{N \to \infty} f(N)/g(N) > 0$.} Moreover, $\E[\bke]$ exactly hits the growth rate $\sqrt{N}$ (i.e., $\E[\bke] = \Theta(\sqrt{N})$) when the convex hull is constructed in $\Real{3}$ from points uniformly drawn from a ball centered at the origin (this follows from \eqref{growth convex hull in Rd}, presented in Case 3).\footnote{While in this example we have $\bk_i = {\bf 0}$, and thereby $\hat{\risk}\big(\algo(\bz_1,\ldots,\bz_N)\big) = {\bf 0}$, which may sound specific, a simple modification gives $\bk_i \neq {\bf 0}$ while preserving all conclusions about $\E[\bke]$. To this purpose, it is enough to select half-space and consider as output of $\algo$ the convex hull of the points that belong to this half-space.\label{footnoe half-space}}

In this Case 2, $\frac{\E[\bke]}{N}\frac{2}{\delta} + \mu(N,\delta/2)$ is still of order $O(1/\sqrt{N})$ and, using result \eqref{R < hat{R} + margin expected}, we see that also in this case the empirical risk $\hat{\risk}\big(\algo(\bz_1,\ldots,\bz_N)\big)$ serves as a $1 \! - \! \delta$ estimator of the risk with margin $O(1/\sqrt{N})$. \qedstar

\textbf{Case 3:} $\E[\bke] = \littleo{N}$

In this case, using \eqref{R < hat{R} + margin expected} no longer allows us to conclude that $\hat{\risk}\big(\algo(\bz_1,\ldots,\bz_N)\big)$ serves as a $1 \! - \! \delta$ estimator of the risk with margin $O(1/\sqrt{N})$. Indeed, if $\E[\bke]$ grows faster than $\sqrt{N}$, then the dominant term of $\frac{\E[\bke]}{N}\frac{2}{\delta} + \mu(N,\delta/2)$ becomes $\frac{\E[\bke]}{N}\frac{2}{\delta}$, which tends to zero at a rate slower than $1/\sqrt{N}$. It is indeed inherent that, when $\E[\bke] = \littleo{N}$, the empirical risk is in general no longer a suitable estimator of the risk $\risk\big(\algo(\bz_1,\ldots,\bz_N)\big)$ with a margin $O(1/\sqrt{N})$, as the following example shows. 

Consider the convex hull in $\Real{d}$ of points drawn uniformly from the unit ball. This case yields 
\begin{equation}
\label{growth convex hull in Rd}
\E[\bke] = \bigTh{N^{\frac{d-1}{d+1}}} \text{ and } \V[\bke] = \bigO{N^{\frac{d-1}{d+1}}} 
\end{equation}
(see \cite{Raynaud1970,Barany1992} for the first result and \cite{Reitzner2003} for the second, while \cite{Barany2008} provides an overview). Taking $d=4$, we obtain $\E[\bke] = \bigTh{N^{\frac{3}{5}}}$, and $\V[\bke] = \bigO{N^{\frac{3}{5}}}$. 

We show that, in this case, it is intrinsic that $\hat{\risk}\big(\algo(\bz_1,\ldots,\bz_N)\big)$ does not provide an upper bound to $\risk\big(\algo(\bz_1,\ldots,\bz_N)\big)$ with a margin of $O(1/\sqrt{N})$. Start by noting that Theorem~19 in \cite{CampiGaratti2023} establishes lower bounds to the risk under the additional assumptions of \emph{non-associativity}, \emph{non-concentrated mass}, and \emph{coherence - part I and II}. All these assumptions are easily seen to be satisfied in the present context, where $\co$ is the compression function that singles out the vertexes of the convex hull. Then, applying Theorem 19 and using the expression (8) in \cite{CampiGaratti2023} for the lower bound to the risk yields\footnote{Although in the current example $\hat{\risk}\big(\algo(\bz_1,\ldots,\bz_N)\big) = 0$ by construction, we prefer to retain this term to allow for evaluations that can be easily extended to variants, for example when the convex hull of the points belonging to a given half-space is considered, as suggested in Footnote \ref{footnoe half-space}.} 
\begin{equation}
\label{R(A)>hatR(A)}
\Pr \left\{ \risk\big(\algo(\bz_1,\ldots,\bz_N)\big) \geq \hat{\risk}\big(\algo(\bz_1,\ldots,\bz_N)\big) + \frac{\bke}{N} - \nu(N,\delta/2) \right\} \geq 1 - \frac{\delta}{2}
\end{equation}
where, for any given $\delta$, $\nu(N,\delta) = O\left(\sqrt{\ln(N)/N}\right)$. Moreover, to draw the result we need the following simple inequality: 
\begin{eqnarray}
\Pr \left\{ \frac{\bke}{N} \geq  \frac{\E[\bke]}{N} - \frac{1}{N}\sqrt{\frac{2 \V[\bke]}{\delta}} \right\} & \geq & \Pr \left\{ \left| \bke -  \E[\bke] \right| \leq  \sqrt{\frac{2 \V[\bke]}{\delta}} \right\} \nonumber \\
& \geq & 1 - \frac{\V[\bke]}{\left( \sqrt{\frac{2 \V[\bke]}{\delta}} \right)^2} \quad \quad \mbox{(using Chebyshev's inequality)} \nonumber \\
& = & 1 - \frac{\delta}{2}. \label{ke>Eke}
\end{eqnarray}
Thus, 
	\begin{eqnarray}
		\lefteqn{ \Pr \left\{ \risk\big(\algo(\bz_1,\ldots,\bz_N)\big) \geq \hat{\risk}\big(\algo(\bz_1,\ldots,\bz_N)\big) + \frac{\E[\bke]}{N} - \frac{1}{N}\sqrt{\frac{2 \V[\bke]}{\delta}}  - \nu(N,\delta/2) \right\} } \label{R>hatR+} \\
		& \geq & \Pr \left\{ \risk\big(\algo(\bz_1,\ldots,\bz_N)\big) \geq \hat{\risk}\big(\algo(\bz_1,\ldots,\bz_N)\big)+ \frac{\bke}{N} - \nu(N,\delta/2) \text{ and } \frac{\bke}{N} \geq \frac{\E[\bke]}{N} - \frac{1}{N}\sqrt{\frac{2 \V[\bke]}{\delta}} \right\} \nonumber \\
		& \geq & 1 - \frac{\delta}{2} - \frac{\delta}{2} \nonumber \\
		& = & 1-\delta, \nonumber
	\end{eqnarray}
where the penultimate step holds in view of \eqref{R(A)>hatR(A)} and \eqref{ke>Eke}. Considering that, of the last three terms in the left-hand side of \eqref{R>hatR+}, the dominant one is $\E[\bke] / N = \bigTh{1 / N^{\frac{2}{5}}}$, while the other two are  $(\sqrt{2\V[\bke]/\delta})/N = O(1 / N^{\frac{7}{10}})$ and $\nu(N,\delta) = O(\sqrt{\ln(N)/N})$, 
we obtain: 
$$
\Pr \left\{ \risk\big(\algo(\bz_1,\ldots,\bz_N)\big) \geq \hat{\risk}\big(\algo(\bz_1,\ldots,\bz_N)\big) + \bigTh{1/N^{\frac{2}{5}}} \right\} \geq 1 - \delta. 
$$
Therefore, any valid margin added to the empirical risk must converge slower than $O(1/\sqrt{N})$ in this case.  \qedstar

\textbf{Case 4:} $\E[\bke] = O(N)$

In this case, considerations similar to those in the previous case apply, yet the situation becomes even more dramatic. At this point the dominant term $\frac{\E[\bke]}{N}\frac{2}{\delta}$ in equation \eqref{R < hat{R} + margin expected}, does not even converge to zero. This is a symptom of the fact that the empirical risk, in general, ceases to be a consistent estimator. An example where this happens is the following. Consider again the convex hull of points in $\Real{2}$, this time drawn from a distribution that is uniform on a circumference. In this case, the risk $\risk\big(\algo(\bz_1,\ldots,\bz_N)\big)$ always equals $1$, since the whole circumference, except just for the drawn points, lies outside the convex hull. On the other hand, the empirical risk $\hat{\risk}\big(\algo(\bz_1,\ldots,\bz_N)\big)$ is always zero.~\qedstar

In Cases 3 and 4, the empirical risk no longer serves as a fast-converging estimator of the risk, and may even lose consistency. On the other hand, as we have seen, the more refined statistic $\bk/N$ (the \emph{complexity} divided by $N$) serves as an estimator of the risk that preserves fast convergence in a data-driven setting based on preferent compression schemes.

\section{Proof of Theorem \ref{change of compression rate}} 
\label{section-proof}

Toward the goal of proving Theorem \ref{change of compression rate}, we need first to revisit Theorem~1 in \cite{CampiGaratti2023}. A close inspection of the proof of this latter theorem shows that it actually establishes a result slightly stronger than what explicitly formulated as the theorem statement. This stronger result will serve here as the starting point for the proof of our Theorem \ref{change of compression rate}. 

The stronger result in the proof of Theorem~1 in \cite{CampiGaratti2023} is contained in equations (46)–(47) of \cite{CampiGaratti2023}. For convenience, we restate it here as Theorem~\ref{thm:main_compression}.

\begin{theorem} \label{thm:main_compression}
	Assume the \emph{preference} Property \ref{preference}. For any $[0,1]$-valued function $\eps_{(\cdot)}$ defined over the integers $k \in \{0,1,\ldots,N\}$, it holds that
	$$
		\Pr \{ \bphi_N \leq \eps_{|\co(\bz_1,\ldots,\bz_N)|} \} \geq 1-\gamma^\ast_N,
	$$
	where $\eps_{|\co(\bz_1,\ldots,\bz_N)|}$ is the random variable obtained by the composition of $|\co(\bz_1,\ldots,\bz_N)|$ (the cardinality of the multiset $\co(\bz_1,\ldots,\bz_N)$) 
	with $\eps_{(\cdot)}$, and 
	\begin{eqnarray} 
		\gamma^\ast_N = \inf_{\lambda_m \geq 0, m=0,\ldots,N} & & \sum_{m=0}^N \lambda_m \nonumber \\
		\textrm{\rm subject to:} & &  {N \choose k} (1-\alpha)^{N-k} \cdot \One{\alpha \in (\eps_k,1]} \leq  \sum_{m=k}^N \lambda_m {m \choose k} (1-\alpha)^{m-k}, \nonumber \\
		& & \forall \alpha \in [0,1], \quad \forall k=0,1,\ldots,N \label{aux_opt_pb}
	\end{eqnarray}
	($\One{A}$ is the indicator function of set $A$). \qedstar
\end{theorem}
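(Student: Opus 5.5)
The plan is to bound the failure probability $\Pr\{\bphi_N > \eps_{|\co(\bz_1,\ldots,\bz_N)|}\}$ by $\sum_m \lambda_m$ for \emph{every} feasible $(\lambda_m)$ of \eqref{aux_opt_pb}, and then take the infimum. The argument splits the failure event by the index set $I\subseteq\{1,\ldots,N\}$, $|I|=k$, such that $\co(\bz_1^N)=\ms(\bz_i: i\in I)$ (with a fixed tie-breaking rule for repeated values), applies the constraint of \eqref{aux_opt_pb} pointwise, and then recombines the terms at each sample size $m$ into a probability. Since the result is contained in equations (46)–(47) of \cite{CampiGaratti2023}, I would reproduce that argument in this notation.

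\textbf{Step 1 (a consequence of preference).} Write $\bz_I=\ms(\bz_i:i\in I)$. By applying Property~\ref{preference} repeatedly in contrapositive form, I will show that $\co(U)=V$ with $V\subseteq W\subseteq U$ implies $\co(W)=V$. Consequently, on the event $\co(\bz_1^N)=\bz_I$ we have $\co(\bz_I)=\bz_I$ and $\co(\bz_I,\bz_j)=\bz_I$ for every $j\notin I$. Let $\boldsymbol{\alpha}_I$ be the probability that a fresh $\bz$ changes the compression of $\bz_I$. By exchangeability and independence of the $\bz_j$, $j\notin I$, from $\bz_I$,
\begin{equation*}
\Pr\{\co(\bz_1^N)=\bz_I,\ \bphi_N>\eps_k\}\le \E\bigl[(1-\boldsymbol{\alpha}_I)^{N-k}\One{\boldsymbol{\alpha}_I\in(\eps_k,1]}\One{\co(\bz_I)=\bz_I}\bigr].
\end{equation*}
This uses that on this event $\bphi_N=\boldsymbol{\alpha}_I$.

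\textbf{Step 2 (summation and use of the constraints).} Summing over the $\binom{N}{k}$ choices of $I$ and over $k$ bounds the failure probability by $\sum_k\binom{N}{k}\E[(1-\boldsymbol{\alpha}_k)^{N-k}\One{\boldsymbol{\alpha}_k>\eps_k}\One{\co(\bz_1^k)=\bz_1^k}]$. Here $\boldsymbol{\alpha}_k$ refers to the first $k$ points. The constraint in \eqref{aux_opt_pb} holds for every $\alpha\in[0,1]$, so I apply it pointwise at $\alpha=\boldsymbol{\alpha}_k$ and exchange the sums. This gives the bound
\begin{equation*}
\sum_{m=0}^N\lambda_m\,\sum_{k=0}^m\binom{m}{k}\E\bigl[(1-\boldsymbol{\alpha}_k)^{m-k}\One{\co(\bz_1^k)=\bz_1^k}\bigr].
\end{equation*}

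\textbf{Step 3 (main obstacle).} It remains to show that the inner sum is at most $1$ for each $m$. The natural interpretation: for a sample $\bz_1^m$, the inner sum is the expected number of index sets $I\subseteq\{1,\ldots,m\}$ such that $\bz_I$ is self-compressing and no single remaining point changes its compression. If one can show that at most one such $I$ exists, namely $I$ with $\bz_I=\co(\bz_1^m)$, the sum is a probability and is therefore $\le 1$.

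The difficulty is the converse direction. Preference directly controls the removal of points, but it is unclear whether \emph{single-point} stability of $\bz_I$ forces $\co(\bz_1^m)=\bz_I$. My first attempt would be the following. Take two candidates $I$ and $I'$, and add the points of $I'\setminus I$ to $\bz_I$ one at a time, invoking Property~\ref{preference} at each stage on the sets $V=\bz_I$ and $V=\bz_{I'}$. The goal is to show that both cannot remain the compression of the union, which yields a contradiction.

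If this uniqueness fails in general, I would instead follow \cite{CampiGaratti2023} more literally. There the inner sum is obtained through a disintegration over the compression set of the $m$-sample, so that it equals $\Pr\{\co(\bz_1^m)\ \text{exists}\}\le 1$. In that case I would verify that the $(1-\alpha)^{m-k}$ weights arise exactly from the conditional probability that the compression of $\bz_1^m$ coincides with $\bz_1^k$. The bound then gives $1-\sum_m\lambda_m$, and taking the infimum yields $1-\gamma^\ast_N$.
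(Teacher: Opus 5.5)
\textbf{Gap in Step 3.} Your Steps 1 and 2 are correct. Step 3, however, needs $\sum_{k=0}^m\binom{m}{k}\E\bigl[(1-\boldsymbol{\alpha}_k)^{m-k}\One{\co(\bz_1^k)=\bz_1^k}\bigr]\le 1$, and this is false under Property~\ref{preference} alone. Neither of your two routes can rescue it. Preference only controls the \emph{removal} of points. The uniqueness you want (a self-compressing set that no single remaining point disturbs must be the compression of the whole sample) is a non-associativity-type property, and preference does not imply it.

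Here is a counterexample. Let $\bz$ be uniform on $[0,1]$. Let $\co(U)$ consist of the smallest point of $U$ in $[0,\tfrac12)$ together with the smallest point of $U$ in $[\tfrac12,1]$ when both exist, and $\co(U)=\emptyset$ otherwise. A direct case check shows that this $\co$ is preferent. Take $m=2$:
\begin{itemize}
\item Since $\co(\ms(z))=\emptyset$ for every $z$, we have $\boldsymbol{\alpha}_0=0$, so the $k=0$ term equals $1$.
\item The $k=1$ term vanishes, because $\co(\ms(z))\neq\ms(z)$.
\item The $k=2$ term equals $\Pr\{\bz_1,\bz_2 \text{ lie in different halves}\}=\tfrac12$.
\end{itemize}
So the inner sum is $\tfrac32$: for a sample with one point in each half, both $\emptyset$ and the whole sample qualify. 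Your fallback fails for the same reason. $\Pr\{\co(\bz_1^m)=\bz_1^k\mid\bz_1^k\}$ is only bounded \emph{above} by $(1-\boldsymbol{\alpha}_k)^{m-k}\One{\co(\bz_1^k)=\bz_1^k}$ (here $\tfrac12$ versus $1$ for $k=0$, $m=2$), which is the wrong direction. As written, your chain gives $\sum_m\lambda_m C_m$ with $C_m$ possibly larger than $1$, not $1-\gamma^\ast_N$.

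\textbf{Missing idea.} The information is lost already in Step 1, where you replace the exact conditional probability by $(1-\boldsymbol{\alpha}_k)^{N-k}$. Keep it instead. For $k\le m\le N$, let $\boldsymbol{g}_{m,k}=\Pr\{\co(\bz_1,\ldots,\bz_m)=\bz_1^k\mid \bz_1,\ldots,\bz_k\}$. By preference, used for removal exactly as in your Step 1, $\co(\bz_1^N)=\bz_1^k$ implies both $\co(\bz_1^m)=\bz_1^k$ and $\co(\bz_1^k,\bz_j)=\bz_1^k$ for all $j>m$. Conditional independence given $\bz_1^k$ then yields $\boldsymbol{g}_{N,k}\le \boldsymbol{g}_{m,k}(1-\boldsymbol{\alpha}_k)^{N-m}$. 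Now multiply the constraint in \eqref{aux_opt_pb} at $\alpha=\boldsymbol{\alpha}_k$ by $\boldsymbol{g}_{N,k}/(1-\boldsymbol{\alpha}_k)^{N-k}$ and apply this inequality term by term; this is where $\lambda_m\ge 0$ enters. (If $\boldsymbol{\alpha}_k=1$ and $k<N$, then $\boldsymbol{g}_{N,k}=0$ and there is nothing to prove.) The result is
\[
\binom{N}{k}\,\boldsymbol{g}_{N,k}\,\One{\boldsymbol{\alpha}_k\in(\eps_k,1]}\;\le\;\sum_{m=k}^{N}\lambda_m\binom{m}{k}\,\boldsymbol{g}_{m,k}.
\]
Take expectations and sum over $k$. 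The left side becomes the failure probability. The right side becomes $\sum_m\lambda_m\sum_{k\le m}\binom{m}{k}\Pr\{\co(\bz_1^m)=\bz_1^k\}$, and the inner sum is now the probability of a disjoint union, hence at most $1$.

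Disjointness requires the index set of the compression to be a.s.\ unique. When the distribution has atoms, an index-based tie-breaking rule destroys the exchangeability behind your factor $\binom{N}{k}$. A clean device is to attach i.i.d.\ uniform marks to the examples and keep the lowest-mark copies. This lifted compression is still preferent, has the same cardinality, and has probability of change of compression at least $\bphi_N$, so bounding it suffices.
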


In this result, the perspective is reversed compared with Theorem 1 in \cite{CampiGaratti2023}: rather than assigning a function $\eps_{(\cdot)}$ satisfying prescribed properties, Theorem \ref{thm:main_compression} takes as input an arbitrary user-chosen function $\eps_{(\cdot)}$ and returns the confidence with which $\eps_{|\co(\bz_1,\ldots,\bz_N)|}$ is a valid upper bound on the probability of change of compression $\bphi_N$. 

On the other hand, Theorem \ref{thm:main_compression} can also be used in the reverse direction: it allows one to compute a valid complexity-based upper-bound on the risk that holds with a user-specified confidence $1-\delta$. This is indeed the use that we make below of Theorem \ref{thm:main_compression}. Crucially, starting from Theorem~\ref{thm:main_compression}, rather than Theorem~1 in \cite{CampiGaratti2023}, provides the extra flexibility needed to establish the convergence rate stated in Theorem~\ref{change of compression rate}. 

To pursue this use of Theorem \ref{thm:main_compression}, we start from a selection of $\{\lambda_m\}_{m=0}^N$, i.e., a tentative solution to \eqref{aux_opt_pb}, such that $\lambda_m \geq 0$, $\forall m$, and $\sum_{m=0}^N \lambda_m = \delta$, and determines $\eps_k$, $k=0,\ldots,N$, so that the inequality in \eqref{aux_opt_pb} is satisfied. 
This ensures the feasibility of the chosen $\{ \lambda_m \}_{m = 0}^N$ for \eqref{aux_opt_pb}, and therefore the optimal solution to \eqref{aux_opt_pb} with the so-computed $\eps_{(\cdot)}$ must result in $\gamma^\ast \leq \sum_{m=0}^N \lambda_m = \delta$. An application of Theorem \ref{thm:main_compression} then yields $\Pr \{ \bphi_N \leq \eps_{|\co(\bz_1,\ldots,\bz_N)|} \} \geq 1-\delta$.

An inspection of the proof of Theorem~1 in \cite{CampiGaratti2023} shows that the expression for $\eps_{(\cdot)}$ given therein is in fact obtained by applying the procedure described above with $\lambda_m = \delta/N$ for $m=0,\ldots,N-1$ and $\lambda_N = 0$. This choice was suggested in \cite{CampiGaratti2023} because it yields an $\eps_{(\cdot)}$ that is tight and informative across the whole range of values for $k$, even for moderate values of $N$. However, as observed in Section \ref{section-change of compression and risk bounds}, the resulting $\eps_{(\cdot)}$ takes a margin from $k/N$ that vanishes asymptotically at rate $O(\sqrt{\ln(N)/N})$. To prove Theorem \ref{change of compression rate}, we instead consider the $\eps_{(\cdot)}$ function obtained from an alternative choice of the $\lambda_m$'s. While this choice is typically less favorable than that in Theorem~1 in \cite{CampiGaratti2023} for moderate values of $N$, as $N \to \infty$ it outperforms the choice made in \cite{CampiGaratti2023} yielding the sought result on the  convergence rate.

Let 
$$
\lambda_m =
\begin{cases}
	\delta, & \quad m =\bar{m} \\
	0 & \quad m \neq \bar{m}
\end{cases},
$$
where $\bar{m} = N - \lfloor \sqrt{N} \rfloor$ ($\lfloor \cdot \rfloor$ denotes integer part). 

With this choice of the $\lambda_m$'s, the inequality in \eqref{aux_opt_pb} becomes
\begin{subequations}
\begin{align}
	{N \choose k} (1-\alpha)^{N-k} \cdot \One{\alpha \in (\eps_k,1]} & \leq  \delta {\bar{m} \choose k} (1-\alpha)^{\bar{m}-k}, & \quad k = 0,\ldots,\bar{m}, \label{eq:ineq<=barm} \\
	{N \choose k} (1-\alpha)^{N-k} \cdot \One{\alpha \in (\eps_k,1]} & \leq  0, & \quad k = \bar{m}+1,\ldots,N. \label{eq:ineq>barm}
\end{align}
\end{subequations}
Since ${N \choose k} (1-\alpha)^{N-k} > 0$ for every $\alpha \in [0,1)$, condition \eqref{eq:ineq>barm} can hold only if $\eps_k = 1$ for $k = \bar{m}+1,\ldots,N$. Consider now \eqref{eq:ineq<=barm}. This inequality is automatically satisfied for $\alpha=1$. For $\alpha \in [0,1)$, division by the positive quantity ${N\choose k}(1-\alpha)^{N-k}$ yields
$$
\One{\alpha \in (\eps_k,1]} \leq \delta \frac{{\bar{m} \choose k}}{{N \choose k}} (1-\alpha)^{\bar{m}-N}.
$$
The right-hand side is smaller than $1$ for $\alpha = 0$, monotonically increasing, and diverging to $+\infty$ for $\alpha \to 1$. Hence, we can choose as $\eps_k$ the unique solution in $(0,1)$ to equation $1 = \delta \frac{{\bar{m} \choose k}}{{N \choose k}} (1-\alpha)^{\bar{m}-N}$, which is $1-\left( \delta \frac{{\bar{m} \choose k}}{{N \choose k}} \right)^{\frac{1}{N-\bar{m}}}$. 

Summarizing, with the choice 
\begin{equation} \label{eq:epsk4as_expression}
\eps_k = 
\begin{cases}
	1-\left( \delta \frac{{\bar{m} \choose k}}{{N \choose k}} \right)^{\frac{1}{N-\bar{m}}} & \quad k = 0,\ldots,\bar{m}, \\
	1 & \quad k = \bar{m}+1,\ldots,N,
\end{cases}
\end{equation}
Theorem~\ref{thm:main_compression} ensures that
\begin{equation} \label{eq:epsk_is_ok}
\Pr \{ \bphi_N \leq \eps_{|\co(\bz_1,\ldots,\bz_N)|} \} \geq 1-\delta, 
\end{equation}
where $\eps_k$ is given by \eqref{eq:epsk4as_expression}. 

Note now that, for $k = \bar{m}+1,\ldots,N$, the value $\eps_k = 1$ can be rewritten as 
\begin{align}
	\eps_k & = \; \frac{k}{N} + \left(1- \frac{k}{N} \right) \nonumber \\
	& \leq \; \frac{k}{N} + \left(1- \frac{\bar{m}}{N} \right) \nonumber \\
	& = \; \frac{k}{N} + \left(1- \frac{N-\lfloor \sqrt{N} \rfloor}{N} \right)
	\tag*{\big[recall that $\bar{m} = N-\lfloor \sqrt{N} \rfloor$\big]}\\
	& = \; \frac{k}{N} + \frac{\lfloor \sqrt{N} \rfloor}{N} \nonumber \\
	& \leq \; \frac{k}{N} + \frac{1}{\sqrt{N}}.
	\label{eq:bound_epsk_k>barm}
\end{align}
Thus, $\eps_k$ is bounded by $k/N$ plus the margin $1/\sqrt{N}$. 

To bound $\eps_k$ when $k = 0,\ldots,\bar{m}$, we first observe that
\begin{eqnarray}
\frac{{\bar{m} \choose k}}{{N \choose k}} =  \frac{\bar{m}!}{(\bar{m}-k)!} \frac{(N-k)!}{N!}  & = & \prod_{i=1}^{N-\bar{m}} \frac{\bar{m}-k+i}{\bar{m}+i} \nonumber \\
& \geq & \left( \frac{\bar{m}-k}{\bar{m}} \right)^{N-\bar{m}} = \left( 1-\frac{k}{\bar{m}} \right)^{N-\bar{m}}, \label{eq:bnd_bin_coeff}
\end{eqnarray}
where the inequality follows because $\frac{\bar{m}-k}{\bar{m}} \leq \frac{\bar{m}-k+i}{\bar{m}+i}$, for all $i\geq0$. Substituting \eqref{eq:bnd_bin_coeff} in the expression for $\eps_k$, $k = 0,\ldots,\bar{m}$, in \eqref{eq:epsk4as_expression} yields 
\begin{eqnarray*}
	\eps_k & \leq & 1 - \left( \delta \left( 1 -\frac{k}{\bar{m}} \right)^{N-\bar{m}} \right)^{\frac{1}{N-\bar{m}}} \\
	& = & 1 - \delta^{\frac{1}{N-\bar{m}}} \left( 1 -\frac{k}{\bar{m}} \right)  \\
	& = &  \frac{k}{N} +  \underbrace{\left(1 - \frac{k}{N}\right)- \delta^{\frac{1}{N-\bar{m}}} \left( 1 -\frac{k}{\bar{m}} \right)}_{\text{margin}}.
\end{eqnarray*}
The margin $\left(1 - \frac{k}{N}\right)- \delta^{\frac{1}{N-\bar{m}}} \left( 1 -\frac{k}{\bar{m}} \right)$ is affine in $k$. Therefore, its maximum over $k=0,\ldots,\bar{m}$ is attained at one of the endpoints, namely $k=\bar{m}$ or $k=0$. In the first case, i.e., $k=\bar{m}$, we have
$$
\text{margin} \leq 1-\frac{\bar{m}}{N} = 1- \frac{N-\lfloor \sqrt{N} \rfloor}{N} = \frac{\lfloor \sqrt{N} \rfloor}{N} \leq \frac{1}{\sqrt{N}}.
$$
For $k=0$, we instead have 
\begin{align*}
	\text{margin}  & = \; 1 - \delta^{\frac{1}{N-\bar{m}}}	\\
	& \leq \ln \left( \frac{1}{\delta^{\frac{1}{N-\bar{m}}} } \right) \tag*{\big[since $1-x \leq \ln(1/x)$\big]} \\
	& = \frac{\ln \left( \frac{1}{\delta} \right)}{N-\bar{m}} \\
	& = \frac{\ln \left( \frac{1}{\delta} \right)}{\lfloor \sqrt{N} \rfloor} \tag*{\big[recall that $\bar{m} = N-\lfloor \sqrt{N} \rfloor$ \big]} \\
	& \leq 2\frac{\ln \left( \frac{1}{\delta} \right)}{\sqrt{N}}.
\end{align*}
Combining the above bounds valid for $k = 0,\ldots,\bar{m}$, and further recalling \eqref{eq:bound_epsk_k>barm} we conclude that 
\begin{equation} \label{eq:mu_explicit}
\eps_k \leq \frac{k}{N} + \max \left\{\frac{1}{\sqrt{N}} , 2\frac{\ln \left( \frac{1}{\delta} \right)}{\sqrt{N}} \right\},
\end{equation}
holds for all $k = 0,\ldots,N$. 

Using \eqref{eq:mu_explicit} in \eqref{eq:epsk_is_ok} yields
$$
\Pr \left\{ \bphi_N \leq \frac{|\co(\bz_1,\ldots,\bz_N)|}{N} + \max \left\{\frac{1}{\sqrt{N}} , 2\frac{\ln \left( \frac{1}{\delta} \right)}{\sqrt{N}} \right\} \right\} \geq 1-\delta,
$$
which provides an explicit expression for $\mu(N,\delta)$ in the statement of Theorem \ref {change of compression rate}. This concludes the proof. \qed

\begin{remark}[On the origin of Theorem \ref{change of compression rate}] \label{rmk:origin}
	
We find it useful to clarify the reasoning that led us to Theorem~\ref{change of compression rate}. Besides motivating the steps in the preceding proof, this discussion provides further insight into the function $\eps_{(\cdot)}$ introduced in \eqref{eq:epsk4as_expression}. 

In Theorem~\ref{thm:main_compression}, it is relatively easy to show that, for a given $\hat{k}$, setting $\eps_{\hat{k}} = \eps$ and $\eps_k = 1$ for all $k \neq \hat{k}$, which corresponds to only certifying $\bphi_N$ when $|\co(\bz_1,\ldots,\bz_N)| = \hat{k}$, yields an optimal value $\gamma^\ast_N$ of \eqref{aux_opt_pb} attained by setting $\lambda_{\hat{m}} \neq 0$ for some suitable $\hat{m} \in \{\hat{k},\ldots,N\}$ and $\lambda_m = 0$ for all $m \neq \hat{m}$. More precisely, this value is $\lambda_{\hat{m}} = \frac{{N \choose \hat{k}}}{{\hat{m} \choose \hat{k}}}(1-\eps)^{N-\hat{m}}$. Conversely, if Theorem \ref{thm:main_compression} is used in reverse direction to determine $\eps_{(\cdot)}$ such that $\bphi_N \leq \eps_{|\co(\bz_1,\ldots,\bz_N)|}$ with a user-specified confidence $1-\delta$, then obtaining the smallest possible value for $\eps_{\hat{k}}$ for a given $\hat{k}$, possibly at the expense of other values $\eps_k$, requires setting $\lambda_{\hat{m}} = \delta$ for some suitable $\hat{m} \in \{\hat{k},\ldots,N\}$, and therefore $\lambda_m = 0$ for all $m \neq \hat{m}$. The interested reader is referred to Appendix \ref{sec:additional_results} for a formal proof of these claims. 

Interestingly, when setting $\lambda_{\hat{m}} = \delta$ as described in the preceding paragraph, Theorem \ref{thm:main_compression} allows one to determine not only $\eps_{\hat{k}}$, but also $\eps_k$ for all $k \neq \hat{k}$. The resulting expressions coincides with that in \eqref{eq:epsk4as_expression} with $\bar{m}$ replaced by $\hat{m}$. As a consequence, one sees that it is not necessary to set $\eps_k = 1$ for $k \neq \hat{k}$ in order to maintain confidence $1-\delta$: there is some room to decrease the other $\eps_k$'s as well. 

However, although the resulting value of $\eps_{\hat{k}}$ is the smallest possible according to the result mentioned above, the same is not true for the other values $\eps_k$ with $k \neq \hat{k}$. In particular, for values $\check{k}$ significantly larger than $\hat{k}$, $\eps_{\check{k}}$ does not come close to its smallest possible value, which can instead be obtained by selecting $\lambda_{\check{m}} = \delta$ for a suitable $\check{m}$. 

In the light of the above, the idea behind the derivation of Theorem~\ref{change of compression rate} arose from the empirical observation that, although $\hat{m}$ and $\check{m}$ can differ significantly for small values of $N$, they come close in relative terms when $N$ increases. Asymptotically in $N$, a single choice $\bar{m} = N - \sqrt{N}$ is sufficient to achieve nearly optimal values of the $\eps_k$'s, simultaneously for all $k$'s, leading to the margin that vanishes at rate $O(1/\sqrt{N})$. \qedstar
\end{remark}

\bibliographystyle{plain} 
\bibliography{RB-convergence-rate}

@article{Berger2025,
	author  = {G.O. Berger},
	title   = {Improved Compression-Based PAC Bounds for General Scenario Decision Making},
	journal = {ArXiv : 2501.08884},		
	year    = {2025},	
	url = {https://arxiv.org/abs/2501.08884},
}

@book{DevGyoLug96,
	author = {L. Devroye , L. Gy\"orfi , G. Lugosi},
	title = {A Probabilistic Theory of Pattern Recognition},
	year = {1996},
	publisher = {Springer},
}

@ARTICLE{CampiGaratti2021,
	author={M.C. Campi and S. Garatti},
	title={A Theory of the Risk for Optimization with Relaxation and its Application to Support Vector Machines},
	journal={Journal of Machine Learning Research},
	year={2021},
	volume={22},
	number={288},
	pages={1-38},
}

@INPROCEEDINGS{CGR2015,  
	author={Campi, M.C. and Garatti, S. and Ramponi, F.A.},  
	booktitle={Proceedings of the 54th Conference on Decision and Control},  
	title={Non-convex scenario optimization with application to system identification},  
	year={2015},
	pages={4023-4028},
}

@article{GarCa2025,
	title={Non-convex scenario optimization},
	author={Garatti, S. and Campi, M.C.},
	journal={Mathematical Programming},
	volume={209},
	pages={557--608},
	year={2025},
	publisher={Springer}
}

@inproceedings{P2L2023,
	author = {Paccagnan, D. and Campi, M.C. and Garatti, S.},
	booktitle = {Advances in Neural Information Processing Systems (NeurIPS 2023)},
	pages = {18165--18185},
	title = {The {P}ick-to-{L}earn algorithm: Empowering compression for tight generalization bounds and improved post-training performance},
	volume = {36},
	year = {2023},
}

@article{P2L2026,
	title={{P}ick-to-{L}earn for {S}ystems and {C}ontrol: data-driven Synthesis with State-of-the-art Safety Guarantees}, 
	author={D. Paccagnan and D. Marks and M.C. Campi and S. Garatti},
	year={2025},
	journal = {ArXiv (article id. arXiv:2512.04781)},
	note={To appear in IEEE Control Systems, vol. 46, no. 6, 2026},
}

@article{Reitzner2003,
	author  = {M. Reitzner},
	title   = {Random polytopes and the {E}fron-{S}tein jackknife inequality},
	journal = {Annals of Probability},
	volume  = {31},
	number  = {4},
	pages   = {2136-2166},
	year    = {2003},	
}

@article{RenyiSulanke63,
	author  = {A. R\'enyi and R. Sulanke},
	title   = {Uber die konvexe H\"ulle von n zuf\"allig gew\"ahlten Punkten},
	journal = {Z. Wahrscheinlichkeitstheorie verw Gebiete},
	volume  = {2},
	number  = {1},
	pages   = {75-84},
	year    = {1963},	
}

@article{Raynaud1970,
	author  = {H. Raynaud},
	title   = {Sur l'enveloppe convexe des nuages de points al\'eatoires dans $\mathbb{R}^n$},
	journal = {Journal of Applied Probability},
	volume  = {7},
	number  = {1},
	pages   = {35-48},
	year    = {1970},
}

@article{Barany1992,
	author  = {I. B{\'a}r{\'a}ny},
	title   = {Random polytopes in smooth convex bodies},
	journal = {Mathematika},
	volume  = {39},
	number  = {1},
	pages   = {81-92},
	year    = {1992},
}

@ARTICLE{Barany2008,
	author={I. B{\'a}r{\'a}ny},
	title={Random points and lattice points in convex bodies},
	journal={Bullettin of the American Mathematical Society},
	year={2008},
	volume={45},
	number={3},
	pages={339-365},
}

@ARTICLE{CampiGaratti2023,
	author={M.C. Campi and S. Garatti},
	title={Compression, Generalization and Learnings},
	journal={Journal of Machine Learning Research},
	year={2023},
	volume={24},
	number={339},
	pages={1-74},
}

@InProceedings{pmlr-v125-bousquet20a,
	title = {Proper Learning, {H}elly Number, and an Optimal {SVM} Bound},
	author = {O. Bousquet and S. Hanneke and S. Moran and N. Zhivotovskiy},
	booktitle = {Proceedings of 33rd Conference on Learning Theory},
	pages = {582--609},
	year = {2020},
	volume = {125},
	series = {Proceedings of Machine Learning Research},
}

@article{Campi2010,
	author = {M.C. Campi},
	title = {Classification with Guaranteed Probability of Error},
	journal = {Machine Learning},
	pages = {63-84},
	volume = {80},
	year = {2010}
}

@ARTICLE{CareRamponiCampi2018,
	author={A. Car\'e and F.A. Ramponi and M.C. Campi.},
	title={A new classification algorithm with guaranteed sensitivity and specificity for medical applications},
	journal={IEEE Control Systems Letters},
	year={2018},
	volume={2},
	number={},
	pages={393-398},
}

@ARTICLE{CortesVapnik1995,
	author={C. Cortes and V. Vapnik},
	title={Support Vector Networks},
	journal={Machine  Learning},
	year={1995},
	volume={20},
	pages={273-297},
}

@ARTICLE{FloydWarmuth1995,
	author={S. Floyd and M. Warmuth},
	title={Sample compression, learnability, and the {V}apnik-{C}hervonenkis Dimension},
	journal={Machine Learning},
	year={1995},
	volume={21},
	pages={269-304},
}

@ARTICLE{GarCa2019,
	author={S. Garatti and M.C. Campi},
	title={Risk and Complexity in Scenario Optimization},
	journal={Mathematical Programming},
	year={2022},
	volume={191},
	number={1},
	pages={243-279},
}

@ARTICLE{GraepelHerbrichShawe-Taylor2005,
	author={T. Graepel and R. Herbrich and J.Shawe-Taylor},
	title={{PAC}-{B}ayesian compression bounds on the prediction error of learning algorithms for classification},
	journal={Machine Learning},
	year={2005},
	volume={59},
	pages={55-76},
}

@article{HanKonSabWei2021,
	author = {S. Hanneke and A. Kontorovich and S. Sabato and R. Weiss},
	title = {{Universal Bayes consistency in metric spaces}},
	journal = {The Annals of Statistics},
	volume = {49},
	number = {4},
	pages = {2129 - 2150},
	year = {2021},
}

@INPROCEEDINGS{HannekeKontorovich_2021,
	author={S. Hanneke and A. Kontorovich},
	title={Stable sample compression schemes: new applications and an optimal {SVM} margin bound},
	booktitle = {Proceedings of the 32nd International Conference on Algorithmic Learning Theory},
	year={2021},
	volume={132},
	pages={1-25},
	series = {Proceedings of Machine Learning Research},
}

@ARTICLE{HelSloWar_1990,
	author={D. Helmbold and R. Sloan and M.K. Warmuth},
	title={Learning nested differences of intersection-closed concept classes},
	journal={Machine Learning},
	year={1990},
	volume={5},
	pages={165-196},
}

@inproceedings{KonSabWei2017,
	author = {A. Kontorovich and S. Sabato and R. Weiss},
	booktitle = {Advances in Neural Information Processing Systems 30 (NeurIPS 2017)},
	title = {Nearest-Neighbor Sample Compression: Efficiency, Consistency, Infinite Dimensions},
	year = {2017},
	address = {Long Beach, CA},
}

@article{KonSabUrn2018,
	author  = {A. Kontorovich and S. Sabato and R. Urner},
	title   = {Active Nearest-Neighbor Learning in Metric Spaces},
	journal = {Journal of Machine Learning Research},
	year    = {2018},
	volume  = {18},
	number  = {195},
	pages   = {1--38},
}

@ARTICLE{LittlestoneWarmuth1986,
	author={N. Littlestone and M. Warmuth},
	title={Relating data compression and learnability},
	journal={Unpublished manuscript},
	year={1986},
	volume={},
	pages={},
}

@book{VapnikChervonenkis1974,
	author = {V. Vapnik and A. Chervonenkis},
	title = {Theory of Pattern Recognition},
	year = {1974},
    publisher = {Nauka, Moscow}
}

@book{SchSmola_BOOK,
	author = {B. Sch\"olkopf  and A.J. Smola},
	title = {Learning with kernels},
	subtitle = {support vector machines, regularization, optimization, and beyond},
	year = {1998},
	publisher = {MIT press},
}

@BOOK{Shalev-Shwartz_Ben-David_2014,
	author =       {S. Shalev-Shwartz and S. Ben-David},
	title =        {Underwstanding Machine Learning (from theory to algorithms)},
	publisher =    {Cambridge University Press},
	year =         {2014},
	address =      {New York, NY},
}

\appendix

\section{Valid margin for empirical means}
\label{appendix empirical mean} 
An application of the Central Limit Theorem (CLT) gives 
\begin{eqnarray}
\prob \left\{ p(A) \leq \frac{1}{N}\sum_{i=1}^N  \mathbb{1} (\bz_i \! \in \! A) + \frac{\bar{\varepsilon}}{\sqrt{N}} \right\} 
& = & 
\prob \left\{  \sqrt{N} \left( \frac{1}{N}\sum_{i=1}^N  \mathbb{1} (\bz_i \! \in \! A) - p(A)\right) \geq -\bar{\varepsilon} \right\} \nonumber \\
& & \xrightarrow[N \to \infty]{} \frac{1}{\sqrt{2 \pi} \sigma_A}\int_{-\bar{\varepsilon}}^{\infty} \ee^{-\frac{x^2}{2\sigma_A^2}} \; \dd x, \nonumber 
\end{eqnarray}
where $\sigma_A^2 = \E \big[ ( \mathbb{1} (\bz_i \! \in \! A) - p(A))^2 \big] = p(A) \cdot (1-p(A))$. Since $p(A) \cdot (1-p(A)) \leq 1/4$, it holds that $\frac{1}{\sqrt{2 \pi} \sigma_A}\int_{-\bar{\varepsilon}}^{\infty} \ee^{-\frac{x^2}{2\sigma_A^2}} \; \dd x \geq \sqrt{\frac{2}{\pi}} \int_{-\bar{\varepsilon}}^{\infty} \ee^{-2 x^2} \; \dd x = 1 - \sqrt{\frac{2}{\pi}} \int_{\bar{\varepsilon}}^{\infty} \ee^{-2 x^2} \; \dd x = 1 - \delta$. Moreover, any margin that is a $o(1/\sqrt{N})$\footnote{For a positive function $g(N)$, notation $o(g(N))$ indicates a positive function $f(N)$ such that $\lim_{N \to \infty} f(N)/g(N) = 0$.} gives: 
\begin{eqnarray}
\prob \left\{ p(A) \leq \frac{1}{N}\sum_{i=1}^N  \mathbb{1} (\bz_i \! \in \! A) + o(1/\sqrt{N}) \right\} =
\prob \left\{  \sqrt{N} \left( \frac{1}{N}\sum_{i=1}^N  \mathbb{1} (\bz_i \! \in \! A) - p(A)\right) \geq -\frac{o(1/\sqrt{N})}{1/\sqrt{N}} \right\}, \nonumber 
\end{eqnarray}
which tends to $1/2$ whenever $p(A)$ is not zero or one. 

\section{Term $\bk \boldsymbol{/} \boldsymbol{N}$ is strictly necessary}
\label{appendix term added}

To show that term $\bk/N$ is strictly necessary, we exhibit an example in which reducing the right-hand side of the inequality inside the probability in \eqref{need of margin} below $\bk/N$ generates an invalid result. 

Suppose that each $\bz_i$ takes value in $[0,1]$ with uniform distribution. For a given $N$, select a second integer $\bar{k} \! \in \! \{0,1,\ldots,N\}$, and consider the following compression $\co$ and learning algorithm $\algo$ defined on multisets $U$ of points in $[0,1]$: 
\begin{itemize}
	\item[(a)] if $|U| < \bar{k}$, then $\co(U) = U$ and $\algo(U)$ is the empty interval; 
	\item[(b)] if $|U| \geq \bar{k}$ and the number of points in $[0,\bar{k}/N]$ is less than $\bar{k}$, then $\co(U)$ is the multiset containing the $\bar{k}$ smallest values of the $z_i$ in $U$ and $\algo(U)$ is the interval $[z_{(\bar{k})},1]$ where $z_{(\bar{k})}$ denotes the $\bar{k}$-th smallest $z_i$ in $U$;
	\item[(c)] if $|U| \geq \bar{k}$ and the number of points in $[0,\bar{k}/N]$ is greater than  or equal to $\bar{k}$, then the compression consists of all points in $[0,\bar{k}/N]$ and $\algo(U)$ is the interval $[\bar{k}/N,1]$.
\end{itemize}
Thus, the hypotheses returned $\algo$ are intervals contained in $[0,1]$ and we say that the interval $\algo(U)$ is inappropriate for $z$ if $z \notin \algo(U)$. Note that $\risk(\algo(U)) \geq \bar{k}/N$ for any $U$, because we always have $\algo(U) \subseteq [\bar{k}/N,1]$.

Next, let us verify that we are in the setting of Theorem \ref{Theorem risk}. The fact that $\co$ is preferent is straightforward; as for the reconstruction function, we can simply take $\algo$ itself, since $\algo(S) = \algo(\co(S))$ by the very definition of $\algo$.

Now, in the present setup, $\bk = |\co(\bz_1,\ldots,\bz_N)|$, because all inappropriate points are already included in the compression. Moreover, $|\co(\bz_1,\ldots,\bz_N)| = \bar{k}$ occurs whenever the number of points in $[0,\bar{k}/N]$ is less than or equal to $\bar{k}$, and it is claimed that this condition is verified with probability at least $0.5$. Indeed, this is the probability with which a Binomial distribution with parameters $N$ and $p = \bar{k}/N$ takes a value less than or equal to $Np$, which is at least $0.5$ because $Np$ is the median of the distribution. Summarizing: with probability at least $0.5$, $\bk$ takes the value $\bar{k}$, and the corresponding risk is no smaller than $\bar{k}/N$ as already observed. This implies that, for any $\delta < 0.5$, the right-hand side of the inequality inside the probability in \eqref{need of margin} cannot be reduced below $\bk/N$ when $\bk$ takes the value $\bar{k}$. Observing now that $\bar{k}$ is an arbitrary value in $\{0,1,\ldots,N\}$, the result is obtained that the term $\bk/N$ cannot be reduced for any value of $\bk$ in its domain of definition.

\section{Proof of the results mentioned in Remark~\ref{rmk:origin}} \label{sec:additional_results}

We begin by showing that, if $\eps_{\hat{k}} = \eps$ for a given $\hat{k}$ and $\eps_k = 1$ for all $k \neq \hat{k}$, then the optimal value $\gamma^\ast_N$ of \eqref{aux_opt_pb} in Theorem~\ref{thm:main_compression} is achieved by setting $\lambda_{\hat{m}} = \frac{{N \choose \hat{k}}}{{\hat{m} \choose \hat{k}}}(1-\eps)^{N-\hat{m}}$ for some suitable $\hat{m} \in \{\hat{k},\ldots,N\}$ and $\lambda_m = 0$ for all $m \neq \hat{m}$.\footnote{In view of Theorem~\ref{thm:main_compression}, this implies that when $\eps_{\hat{k}} = \eps$ and $\eps_k = 1$ for $k \neq \hat{k}$ it holds that 
\begin{equation} \label{eq:min_m_bound}
\Pr \{ \bphi_N \leq \eps_{|\co(\bz_1,\ldots,\bz_N)|} \} \geq 1-\min_{m \in \{\hat{k},\ldots,N\}} \frac{{N \choose \hat{k}}}{{m \choose \hat{k}}}(1-\eps)^{N-m}.
\end{equation}
Interestingly, the recent article \cite{Berger2025} shows that \eqref{eq:min_m_bound} can also be proven with an argument that makes use of elementary mathematics, without resorting to the theoretical apparatus underlying the more general Theorem~\ref{thm:main_compression}. Note that, although \cite{Berger2025} is framed in the context of data-driven decision making as in \cite{GarCa2019,GarCa2025}, the same argument can be adapted to the compression framework considered in the present paper.}

Start by considering the collection $\calC$ of functions ${m \choose \hat{k}}(1-\alpha)^{m-\hat{k}}$ for $m=\hat{k},\ldots,N-1$. As a simple calculation reveals, for $\alpha \in [0,1]$ we have ${m \choose \hat{k}}(1-\alpha)^{m-\hat{k}} \geq {m+1 \choose \hat{k}}(1-\alpha)^{m+1-\hat{k}}$ for $\alpha \geq \frac{\hat{k}}{m+1}$ and ${m \choose \hat{k}}(1-\alpha)^{m-\hat{k}} \leq {m+1 \choose \hat{k}}(1-\alpha)^{m+1-\hat{k}}$ for $\alpha \leq \frac{\hat{k}}{m+1}$. Iterating this comparison shows that
\begin{equation} \label{eq:order_of_func}
{m \choose \hat{k}}(1-\alpha)^{m-\hat{k}} \geq {m' \choose \hat{k}}(1-\alpha)^{m'-\hat{k}}  \quad \text{when} \quad \frac{\hat{k}}{m+1} \leq \alpha \leq \frac{\hat{k}}{m}
\end{equation}
for every $m' \in \{\hat{k},\ldots,N\}$, and ${N \choose \hat{k}}(1-\alpha)^{N-\hat{k}}$ is the largest function in the interval $0 \leq \alpha \leq\frac{k}{N}$. 

With $\eps_{\hat{k}} = \eps$ and $\eps_k = 1$ for $k \neq \hat{k}$, consider a feasible choice $\lambda_0, \lambda_1, \ldots, \lambda_N$ for problem \eqref{aux_opt_pb}. For $k \neq \hat{k}$, \eqref{aux_opt_pb} is satisfied independently of the choices of the $\lambda_m$'s. However, the $\lambda_m$'s must satisfy \eqref{aux_opt_pb} for $k = \hat{k}$, which gives 
\begin{equation} \label{eq:simplified_feas_cond}
{N \choose \hat{k}} (1-\eps)^{N-\hat{k}} \leq  \sum_{m=\hat{k}}^N \lambda_m {m \choose \hat{k}} (1-\eps)^{m-\hat{k}}.
\end{equation}
Let $\hat{m}$ be the integer in $\{\hat{k},\ldots, N\}$ such that $\frac{\hat{k}}{\hat{m}+1} \leq \eps \leq \frac{\hat{k}}{\hat{m}}$,\footnote{While condition $\frac{\hat{k}}{\hat{m}+1} \leq \eps \leq \frac{\hat{k}}{\hat{m}}$ typically defines $\hat{m}$ uniquely, when $\eps$ takes an extreme point of the interval $[\frac{\hat{k}}{\hat{m}+1}, \frac{\hat{k}}{\hat{m}}]$, two consecutive integers $\hat{m}$ might satisfy the condition. In this case, any of them can be chosen. It is also understood that $\hat{m}= N$ if $\eps \leq \frac{\hat{k}}{N}$.} and define $\lambda'_{\hat{m}} = \sum_{m=0}^N \lambda_m$ and $\lambda'_m = 0$ for $m \neq \hat{m}$. In view of \eqref{eq:simplified_feas_cond} and \eqref{eq:order_of_func}, we then obtain 
\begin{align*}
{N \choose \hat{k}} (1-\eps)^{N-\hat{k}}  & \leq  \sum_{m=\hat{k}}^N \lambda_m {m \choose \hat{k}} (1-\eps)^{m-\hat{k}} \\
& \leq \left( \sum_{m=\hat{k}}^N \lambda_m \right) {\hat{m} \choose \hat{k}} (1-\eps)^{\hat{m}-\hat{k}} \\
& = \lambda'_{\hat{m}} {\hat{m} \choose \hat{k}} (1-\eps)^{\hat{m}-\hat{k}},
\end{align*}
which shows that $\lambda'_0, \lambda'_1, \ldots, \lambda'_N$ is feasible for \eqref{aux_opt_pb} at $\alpha = \eps$, and thereby for any $\alpha \in [0,1]$, as it is easy to verify. Moreover, the value corresponding to these $\lambda'_m$'s is the same as that of the $\lambda_m$'s because $\sum_{m=0}^N \lambda_m = \sum_{m=0}^N \lambda'_m = \lambda'_{\hat{m}}$. This entails that the optimal solution to \eqref{aux_opt_pb} con be obtained by focusing on solutions of the form $\lambda'_{\hat{m}} \neq 0$ and $\lambda'_m = 0$ for $m \neq \hat{m}$. Finally, the minimum value of $\lambda'_{\hat{m}}$ is achieved by selecting $\lambda'_{\hat{m}}$ so that ${N \choose \hat{k}} (1-\eps)^{N-\hat{k}} = \lambda'_{\hat{m}} {\hat{m} \choose \hat{k}} (1-\eps)^{\hat{m}-\hat{k}}$, which gives $\lambda'_{\hat{m}} = \frac{{N \choose \hat{k}}}{{\hat{m} \choose \hat{k}}}(1-\eps)^{N-\hat{m}}$.

Consider now the converse use of Theorem~\ref{thm:main_compression}, where, for given $\lambda_m \geq 0$, $m = 0,\ldots,N$, with $\sum_{m=0}^N \lambda_m = \delta$, the $\eps_k$'s are obtained as the values that make the constraints in \eqref{aux_opt_pb} tight. We show that the $\lambda_m$'s that yield, for a given $\hat{k}$, the smallest value for $\eps_{\hat{k}}$ are $\lambda_{\hat{m}} = \delta$, for some $\hat{m} \in \{\hat{k},\ldots,N\}$, and $\lambda_m = 0$ for $m \neq \hat{m}$. 

Indeed, this result follows from what has been established above. Consider any partition of $\delta$ into multiple $\lambda_m$'s so that $\sum_{m=0}^N \lambda_m = \delta$, leading to certain $\eps_k$, $k=0,\ldots,N$. Then, by setting $\eps_k = 1$ for all $k \neq \hat{k}$ while keeping $\eps_{\hat{k}}$ unchanged,  solving \eqref{aux_opt_pb} for optimality must result in a solution of the form $\lambda^\ast_{\hat{m}} \leq \delta$ for some $\hat{m}$ and $\lambda^\ast_m = 0$ for $m \neq \hat{m}$. Now set $\lambda'_{\hat{m}} = \delta$ and $\lambda'_m = 0$ for $m \neq \hat{m}$ and recompute the corresponding $\eps'_k$, $k=0,\ldots,N$. The resulting $\eps'_{\hat{k}}$ is either equal to the original $\eps_{\hat{k}}$ if $\lambda^\ast_{\hat{m}} = \delta$ or smaller if $\lambda^\ast_{\hat{m}} < \delta$ (this is because increasing the $\lambda_m$'s always leads to a decrease of the $\eps_k$'s). This shows that for every choice of the $\lambda_m$'s such that $\sum_{m=0}^N \lambda_m = \delta$, one can always construct  an alternative choice $\lambda'_{\hat{m}} = \delta$ and $\lambda'_m = 0$ for $m \neq \hat{m}$ that yields the same or a smaller value of $\eps_{\hat{k}}$. This concludes the proof.

\end{document}